\newtheorem{theorem}{Theorem}
\newtheorem{lemma}{Lemma}
\newtheorem{Assumption}{Assumption}
\DeclareMathOperator*{\argmin}{\arg\!\min}
\newcommand{\cmark}{\checkmark}%
\newcommand{\xmark}{$\times$}%
\definecolor{winered}{rgb}{0.5,0,0}
\title{\LARGE \bf
Min-Max Optimization under Delays
}
\author{Arman Adibi, Aritra Mitra, and Hamed Hassani
\thanks{A. Adibi and H. Hassani are with the Department of Electrical and Systems Engineering, University of Pennsylvania. Email: {\tt \{aadibi, hassani\}@seas.upenn.edu}. A. Mitra is with the Department of Electrical and Computer Engineering,  North Carolina State University. Email: {\tt amitra2@ncsu.edu}. This work was supported by NSF Award 1837253, NSF CAREER award CIF 1943064, and the Air Force Office
of Scientific Research Young Investigator Program (AFOSR-YIP) under award FA9550-20-1-0111.}%
}
\begin{document}

\maketitle
\thispagestyle{empty}
\pagestyle{empty}

\begin{abstract}
Delays and asynchrony are inevitable in large-scale machine-learning problems where communication plays a key role. As such, several works have extensively analyzed stochastic optimization with delayed gradients. However, as far as we are aware, no analogous theory is available for min-max optimization, a topic that has gained recent popularity due to applications in adversarial robustness, game theory, and reinforcement learning. Motivated by this gap, we examine the performance of standard min-max optimization algorithms with delayed gradient updates. First, we show (empirically) that even small delays can cause prominent algorithms like Extra-gradient (\texttt{EG}) to diverge on simple instances for which \texttt{EG} guarantees convergence in the absence of delays. Our empirical study thus suggests the need for a careful analysis of delayed versions of min-max optimization algorithms. Accordingly, under suitable technical assumptions, we prove that Gradient Descent-Ascent (\texttt{GDA}) and \texttt{EG} with delayed updates continue to guarantee convergence to saddle points for convex-concave and strongly convex-strongly concave settings. Our complexity bounds reveal, in a transparent manner, the slow-down in convergence caused by delays.
\end{abstract}
\section{Introduction}
Min-max optimization is a fundamental problem with applications in various fields, including game theory \cite{von}, 
 machine learning \cite{goodfellow2020generative}, robust optimization \cite{ben2009robust}, and more recently, adversarial robustness \cite{madry2017towards}. As such, the convergence analysis of various min-max optimization algorithms has received considerable attention over the years \cite{korpelevich, nedic, daskalakis, mokhtari2020unified}. While this has resulted in a rich literature that provides non-asymptotic guarantees for the vanilla versions of these algorithms, not much is known about their \textit{robustness} to different types of perturbations that show up in practice. In particular, for large-scale machine learning problems involving communication between multiple servers and agents, such perturbations get manifested in the form of (unavoidable) delays and asynchrony. Consequently, several works have extensively studied stochastic optimization with delayed gradients; since the literature on this topic is vast, we refer the reader to \cite{duchi2015asynchronous, doan2017, arjevani2020tight, stich19, koloskova2022} and the references therein. However, to our knowledge, there is no analogous theory for min-max optimization. Motivated by this gap, the goal of our paper is to build an understanding of the effect of delays on the convergence of common min-max optimization algorithms like Gradient Descent-Ascent (\texttt{GDA}) and Extra-Gradient (\texttt{EG}). Our main contributions in this regard are as follows.  

\subsection{Summary of Main Results}
$\bullet$ We start with a result that is perhaps surprising. In Section~\ref{sec:Divergence}, we empirically examine the effect of delays on the behavior of the Extra-Gradient 
 algorithm due to Korpelevich~\cite{korpelevich}. We observe that even with the smallest possible delay, i.e., a unit delay, \texttt{EG} diverges on a simple convex-concave function; see Fig.~\ref{fig:EG_diverge}.\footnote{The Gradient Descent-Ascent (\texttt{GDA}) algorithm diverges on this instance even in the absence of delays \cite{daskalakis}.\label{ftn:div}} Notably, in the absence of delays, \texttt{EG} provably guarantees convergence to a saddle-point for this function. This observation, although empirical, suggests that \textit{delays can have non-trivial effects on the convergence of popular min-max optimization algorithms}. 

$\bullet$ Our empirical study conveys the message that technical assumptions that are typically not required to study vanilla \texttt{EG} might, in fact, turn out to be needed to ensure convergence under delays. Accordingly, in Section~\ref{section:DEG_CC}, we study \texttt{DEG} - a version of \texttt{EG} with updates based on delayed gradients - for smooth, convex-concave functions over a \textit{bounded} domain. In Theorem~\ref{thm:DEGCC}, we show that \texttt{DEG} guarantees convergence to a saddle-point at a rate $O(\sqrt{\tau_{\max}}/{\sqrt{T}})$, where $T$ is the number of iterations, and $\tau_{\max}$ is a uniform bound on the delays. Our proof of this result is based on a connection to adversarial perturbations on statistical min-max learning problems in the recent work~\cite{adibi2022distributed}. 

In the absence of delays, the convergence rates of \texttt{EG} and Gradient Descent-Ascent (\texttt{GDA}) are $O(1/T)$ \cite{aryan} and $O(1/\sqrt{T})$ \cite{nedic}, respectively. Our empirical divergence result (see Footnote~\ref{ftn:div}) and Theorem~\ref{thm:DEGCC} collectively suggest \textit{that under delays, the behavior of \texttt{EG} is similar to that of \texttt{GDA}.}  

$\bullet$ To further investigate the above point, we turn our attention to the behavior of \texttt{GDA} under delays in Section~\ref{sec:DGDA_CC}; we refer to this delayed version as $\texttt{DGDA}$. For smooth, convex-concave functions with bounded gradients, we prove that \texttt{DGDA} exhibits a convergence rate of $O(\sqrt{\tau_{\max}}/{\sqrt{T}})$ - exactly like \texttt{DEG}; see Theorem~\ref{thm:DGDACC}. However, unlike the analysis for \texttt{DEG}, we do not assume a bounded domain. Instead, we provide a careful analysis to argue that with suitable step-sizes, the iterates of \texttt{DGDA} remain bounded. 

$\bullet$ All our results above pertain to scenarios where there is some underlying assumption of boundedness (either on the gradients or on the domain). Thus, one may ask: \textit{Can min-max optimization algorithms under delays converge in the absence of such boundedness assumptions?} In Section~\ref{sec:DGDA_SCSC}, we answer this question in the affirmative by studying \texttt{DGDA} for smooth, strongly convex-strongly concave functions. We prove that \texttt{DGDA} guarantees \textit{linear convergence to the saddle point} at a rate of $O(\exp(-T/\tau^3_{\max}))$; see Theorem~\ref{thm:scsc}. 

As far as we are aware, our results above are novel and provide the first steps toward theoretically understanding the robustness of min-max optimization algorithms to delay-induced perturbations. Our results are summarized in Table~\ref{tab:1}. 

\section{Problem Setting}
\begin{table*}[t!]
 \caption{The table below presents a summary of our findings, outlining the conditions required for each algorithm to achieve the specified convergence rate. In the smooth convex-concave case, the convergence rate corresponds to the number of iterations needed for the duality gap to be less than $\epsilon$. For the smooth strongly convex-strongly concave case (SC-SC), the rate corresponds to the number of iterations needed for the distance to saddle points to be less than $\epsilon$. It is worth noting that in this table, we hide the dependence on $G$, $L$, and the strong-convexity parameter in the ${O}$ notation.}
  \centering
  \begin{tabular}{c|ccccc}
   \hline
    Algorithm & Bounded Gradient & Bounded Domain & SC-SC & Convex-Concave & Convergence Rate \\
     \hline
     \hline\\
    Delayed Extra-Gradient (\texttt{DEG}) & \cmark & \cmark& \xmark&\cmark& $\mathcal{O}(\frac{\tau_{\max}}{\epsilon^2})$\\\\\hline
    \\
    Delayed Gradient Descent-Ascent (\texttt{DGDA}) & \cmark & \xmark& \xmark&\cmark& ${O}(\frac{\tau_{\max}}{\epsilon^2})$\\
    \\
     \hline\\
      Delayed Gradient Descent-Ascent (\texttt{DGDA}) & \xmark & \xmark& \cmark&\xmark & $ {O}(\tau_{\max}^3\log(\frac{1}{\epsilon}))$\\\\
     \hline  
  \end{tabular}
  \label{tab:1}
\end{table*}

In this section, we start by describing the basic setup of a min-max optimization problem. Next, we show empirically how \texttt{EG} can diverge with even one-step delays. Finally, we conclude the section by outlining some technical assumptions that will be made for the majority of the paper to ensure boundedness and convergence of iterates. 

\textbf{The basic min-max optimization setup.} Let $\mathcal{X}$ and $\mathcal{Y}$ be nonempty, convex subsets of $\mathbb{R}^m$ and $\mathbb{R}^n$, respectively.\footnote{While we will assume that $\mathcal{X}$ and $\mathcal{Y}$ are bounded sets in Section~\ref{section:DEG_CC}, this assumption will be later relaxed in Sections~\ref{sec:DGDA_CC} and \ref{sec:DGDA_SCSC}.} Given a mapping of the form $f:\mathcal{X} \times \mathcal{Y} \rightarrow \mathbb{R}$, we are interested in solving the following optimization problem:
\begin{equation}
\min_{x \in \mathcal{X}}\max_{y \in \mathcal{Y}} f(x,y).
\label{eq:opt}
\end{equation}
Throughout the paper, we will assume that $f(x,y)$ is \textit{continuously differentiable} in $x$ and $y$, and  \textit{convex-concave} over $\mathcal{X} \times \mathcal{Y}$. Specifically, $f(\cdot, y): \mathcal{X} \rightarrow \mathbb{R}$ is convex for every $y\in\mathcal{Y}$, and $f(x,\cdot): \mathcal{Y} \rightarrow \mathbb{R}$ is concave for every $x \in \mathcal{X}$. Our goal is to find a saddle point $(x^*,y^*)$ of $f(x,y)$ over the set $\mathcal{X} \times \mathcal{Y}$, where a saddle point is defined as a vector pair $(x^*,y^*) \in \mathcal{X} \times \mathcal{Y}$ that satisfies
\begin{equation}
    f(x^*,y) \leq f(x^*,y^*) \leq f(x,y^*), \forall x \in \mathcal{X}, y \in \mathcal{Y}.
\label{eqn:saddle_point}
\end{equation}

For any $\bar{x} \in \mathcal{X}$ and $\bar{y}\in\mathcal{Y}$, let  $\nabla_x f(\bar{x},\bar{y})$ and $\nabla_y f(\bar{x},\bar{y})$ denote the partial gradients of $f(x,y)$ with respect to $x$ and $y$, respectively, at $(\bar{x},\bar{y})$. Typical first-order iterative min-max optimization algorithms such as \texttt{GDA}, \texttt{EG}, and Optimistic Gradient Descent-Ascent (\texttt{OGDA}) aim to solve for $(x^*, y^*)$ based on an oracle that provides partial gradients of $f(x,y)$ evaluated at the most recent iterates of the algorithm. 

\textbf{The delay model.} Not much, however, is known about scenarios where the oracle is \textit{imperfect}. To that end, we studied the effect of adversarial perturbations on the partial gradients of $f(x,y)$ in our recent work~\cite{adibi2022distributed}. In this work, we take a different stance. Instead of considering \textit{arbitrary} adversarial perturbations, we will focus on \textit{structured} perturbations induced by delays. As mentioned earlier in the Introduction, the source of such delays could be communication latencies or system-level computational challenges such as stragglers, both of which are prevalent in distributed systems. 
In this work, given an iterative min-max optimization algorithm that generates a sequence of iterates $\{(x_k, y_k)\}$, we assume that at iteration $k$, we only have access to partial gradients of $f(x,y)$ computed at a \textit{stale} iterate $(x_{k-\tau_k}, y_{k-\tau_k})$, i.e., we have access to $\nabla_x f(x_{k-\tau_k}, y_{k-\tau_k})$ and $\nabla_y f(x_{k-\tau_k}, y_{k-\tau_k})$, where $\tau_k$ is the delay at iteration $k$. While we allow the delays to be time-varying, throughout the paper, we will work under the running assumption that all delays are uniformly bounded, i.e., there exists some positive integer $\tau_{\max}$ such that  $\tau_k \leq \tau_{\max}, \forall k$. 

Our goal is to understand 
what happens, when for computing the next iterate $(x_{k+1},y_{k+1})$, one uses these delayed gradients as opposed to $\nabla_x f(x_{k}, y_{k})$ and $\nabla_y f(x_{k}, y_{k})$. Specifically, we ask:
\begin{itemize}
    \item Can we hope for convergence to saddle points using delayed versions of algorithms like \texttt{GDA} and \texttt{EG}? 

    \item If so, for different classes of functions,  how do the convergence rates get affected by $\tau_{\max}$?
\end{itemize}

In the next subsection, we demonstrate (empirically) that the answers to such questions are more nuanced than what one might initially expect. 

\subsection{Divergence of Extra-Gradient Algorithm under Delay}
\label{sec:Divergence}
Let us start by quickly reviewing how the Extra-gradient (\texttt{EG}) algorithm for finding saddle-points operates in an unconstrained setting. \texttt{EG} first computes a set of mid-points $(\hat{x}_k, \hat{y}_k)$ by using partial gradients evaluated at the current iterate $(x_k, y_k)$: 
\begin{equation}
\begin{aligned}
    \hat{x}_k &\gets x_k-\alpha \nabla_x f(x_{k},y_{k})  \\
    \hat{y}_k &\gets y_k+\alpha \nabla_y f(x_{k},y_{k}),
\end{aligned}
\label{eqn:mid_point_d21}
\end{equation}
where $\alpha$ is a suitable step-size. Next, using gradients evaluated at the mid-points, \texttt{EG} computes the next iterates as 
\begin{equation}
\begin{aligned}
    {x}_{k+1} &\gets x_k-\alpha \nabla_x f(\hat x_{k},\hat y_{k}) \\
    {y}_{k+1} &\gets y_k+\alpha \nabla_x f(\hat x_{k},\hat y_{k}). 
\end{aligned}
\label{eqn:end_point_d22}
\end{equation}

For smooth, convex-concave functions, the above \texttt{EG} procedure guarantees convergence to a saddle-point at a rate of $O(1/T)$, where $T$ is the number of iterations \cite{aryan}. Moreover, to achieve this convergence, one does not need to make any assumption of a bounded domain or bounded gradients. 

Now to illustrate the challenges posed by delays, let us consider solving the following problem
\begin{equation}
\min_x\max_y \langle x,y\rangle,
\label{eqn:ex}
\end{equation}
using a version of \texttt{EG} where all partial gradients are evaluated at iterates that are delayed by just one time-step.\footnote{Formally, the delayed \texttt{EG} algorithm we study is outlined in Algorithm~\ref{algo:DEG}.} Whereas one might have expected a slow-down in convergence due to delays, Figure \ref{fig:EG_diverge} shows that in this specific setting, a unit delay causes \texttt{EG} to diverge! This demonstrates that delays can lead to non-trivial phenomena for standard min-max algorithms, thereby justifying our current study. 

A rough explanation for the above phenomenon is as follows. In \cite{mokhtari2020unified}, the authors argued that \texttt{EG} can be studied as an approximate version of the Proximal Point (PP) algorithm, which, in turn, operates as follows:
\begin{equation}
\begin{aligned}
{x}_{k+1} &\gets x_k-\alpha \nabla_x f( x_{k+1}, y_{k+1})  \\
{y}_{k+1} &\gets y_k-\alpha \nabla_y f( x_{k+1}, y_{k+1}).
\end{aligned}
\end{equation}
\begin{figure}
    \centering
\includegraphics[width=9cm]{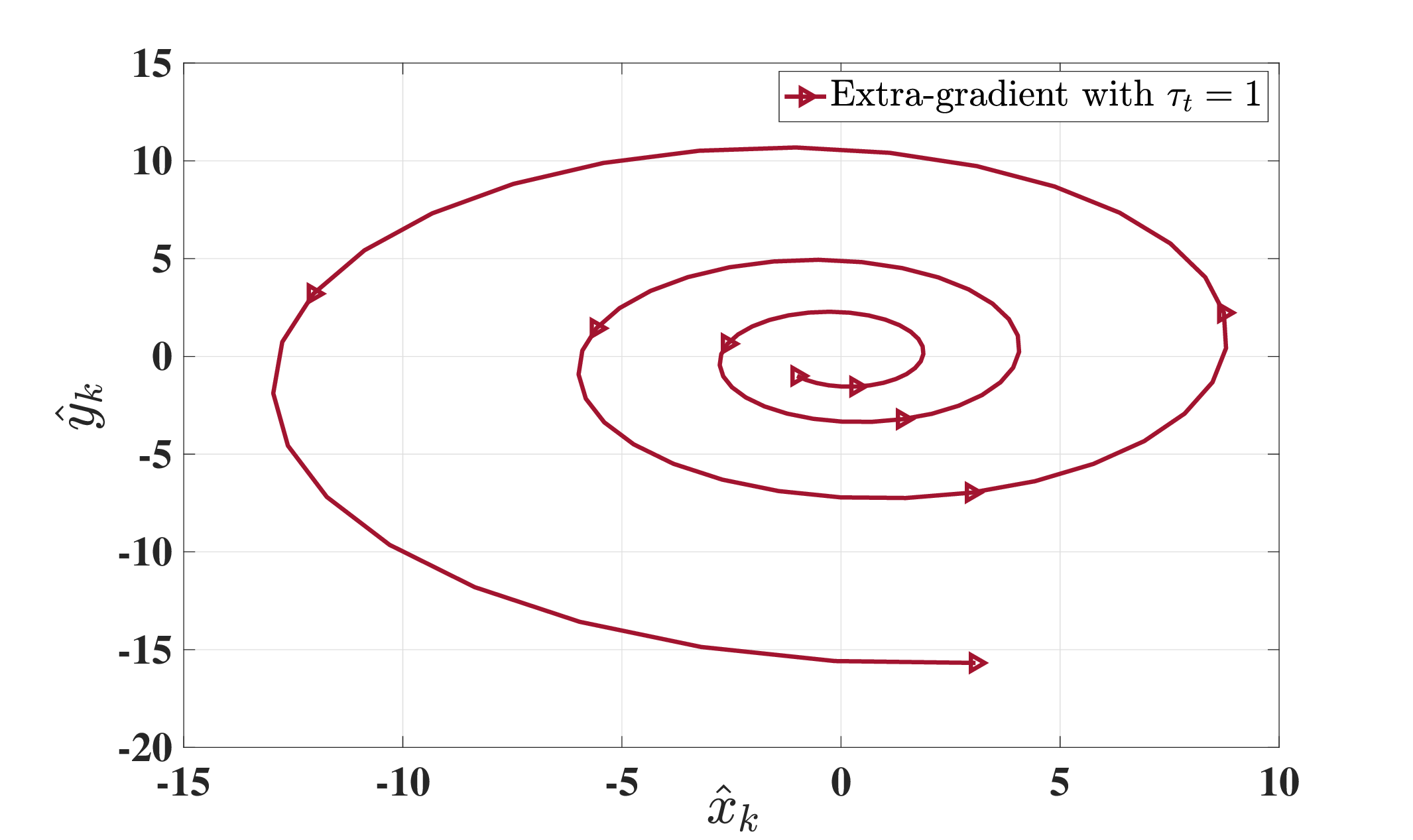}
    \caption{The Extra-gradient algorithm fails to converge, even with just one step delay, for the optimization problem $\min_x\max_y \langle x, y\rangle$. In this plot, we used a step size of $\alpha=0.2$. However, with the same step size and no delay, the Extra-gradient algorithm converges to the origin, which is the saddle-point for this problem.}
    \label{fig:EG_diverge}
\end{figure}

When the gradients on the right-hand side of the above equations are evaluated at one-step-delayed iterates, the above algorithm reduces to the \texttt{GDA} algorithm. Unlike \texttt{EG}, however, \texttt{GDA} can diverge for smooth, convex-concave problems like the one in Eq.~\eqref{eqn:ex}, \textit{even in the absence of delays}. In particular, some assumption on the boundedness of domain or gradients is needed to ensure the convergence of \texttt{GDA} for convex-concave problems. From the above discussion, we conclude that since \texttt{EG} with delays tends to behave like \texttt{GDA}, we need to impose additional technical assumptions to ensure convergence to saddle points. As such, we will impose the following assumption of bounded gradients at various points in the paper. 

\begin{Assumption}
\label{ass:boundedgrad} There exists a constant $G > 1$ such that the following holds for all $x \in \mathcal{X}$, and all $y \in \mathcal{Y}$: $\Vert \nabla_x f(x,y) \Vert \leq G$, and $ \Vert \nabla_y f(x,y) \Vert \leq G$.\footnote{We will use $\Vert \cdot \Vert$ to represent the Euclidean norm.} 
\end{Assumption}

We will also make the following standard assumption that the partial gradients of $f(x,y)$ are Lipschitz continuous. 

\begin{Assumption}
\label{ass:smoothness} There exists a constant $L > 1$ such that the following holds for all $x_1, x_2 \in \mathcal{X}$, and all $y_1, y_2 \in \mathcal{Y}$:
\begin{equation}
\begin{aligned}
    \Vert \nabla_x f(x_1,y_1) - \nabla_x f(x_2,y_2) \Vert &\leq L \left(\Vert x_1-x_2 \Vert + \Vert y_1 - y_2 \Vert \right), \\
    \Vert \nabla_y f(x_1,y_1) - \nabla_y f(x_2,y_2) \Vert &\leq L \left(\Vert x_1-x_2 \Vert + \Vert y_1 - y_2 \Vert \right).
\end{aligned}  
\nonumber
\end{equation}
\end{Assumption}
\section{Analysis of Delayed Extra-gradient for Convex-Concave functions}
\label{section:DEG_CC}
In Section \ref{sec:Divergence}, we saw that in the absence of a projection step to ensure the boundedness of iterates, the \texttt{EG} algorithm diverges on very simple functions, even with a one-step delay. Based on this empirical observation, in this section, we study delayed extra-gradient (\texttt{DEG}) - outlined in  Algorithm~\ref{algo:DEG} - under additional assumptions. In particular, throughout this section, we will work under  Assumptions~\ref{ass:boundedgrad} and \ref{ass:smoothness}, i.e., we will assume that the partial gradients of $f(x,y)$ are Lipschitz continuous and uniformly bounded.  It is important to note here that the divergence of \texttt{DEG}, as illustrated in Figure \ref{fig:EG_diverge}, occurs when we do not impose Assumption \ref{ass:boundedgrad}. Thus, this assumption will play a crucial role in our analysis. 

The update rule for \texttt{DEG} (Algorithm~\ref{algo:DEG}) involves two steps. In the first step, \texttt{DEG} computes a midpoint $(\hat{x}_k, \hat{y}_k)$ based on partial gradients evaluated at a stale iterate $(x_{k-\tau_k}, y_{k-\tau_k})$; see Eq.~\eqref{eqn:mid_point_d}. In the second step, \texttt{DEG} computes the next iterate  $(x_{k+1},y_{k+1})$ based on partial gradients evaluated at a stale mid-point $(\hat{x}_{k-\hat{\tau}_k}, \hat{y}_{k-\hat{\tau}_k})$; see Eq.~\eqref{eqn:end_point_d2}. There are two important things to take note of here. First, in each of the above updates, we project onto $\mathcal{X} \times \mathcal{Y}$ to ensure the boundedness of iterates. Second, our analysis is general enough to accommodate time-varying delays; furthermore, we allow  ${\tau}_k$ and $\hat{\tau}_k$ to also be different. That said, as mentioned before, we will work under the running assumption that all delays are bounded uniformly by $\tau_{\max}$, i.e., $\max\{{\tau}_k, \hat{\tau}_k\} \leq \tau_{\max}, \forall k$.  

\textbf{Key Insight and Outline of Analysis.} The starting point of our analysis for \texttt{DEG} is the observation that the errors induced by delays can be interpreted as \textit{bounded perturbations}. As we shall see in Lemma \ref{lemma:error}, the boundedness of the delay-induced errors follows as a direct consequence of Assumptions \ref{ass:boundedgrad} and \ref{ass:smoothness}, and the uniform boundedness assumption on the delays. This key observation allows us to immediately make a connection to our prior work in \cite{adibi2022distributed}, where we studied min-max optimization under adversarial perturbations. Building on this connection, we start with the following result that establishes some basic inequalities for our subsequent analysis; the proof of this result follows the same steps as that of \cite[Lemma 1]{adibi2022distributed}. 

\begin{algorithm}[t]
\caption{Delayed Extra-Gradient  (\texttt{DEG})} 
\label{algo:DEG}
\begin{algorithmic}[1]
\Require Initial vectors $x_1 \in \mathcal{X}$, $y_1 \in \mathcal{Y}$; algorithm parameters: step-size $\alpha > 0$.
\For {$k=1,\ldots,T$} 

\State  
\begin{equation}
\begin{aligned}
    \hat{x}_k &\gets \Pi_{\mathcal{X}}\left(x_k-\alpha \nabla_x f(x_{k-\tau_k},y_{k-\tau_k}) \right) \\
    \hat{y}_k &\gets \Pi_{\mathcal{Y}}\left(y_k+\alpha \nabla_y f(x_{k-\tau_k},y_{k-\tau_k}) \right).
\end{aligned}
\label{eqn:mid_point_d}
\end{equation}
\State 

\begin{equation}
\begin{aligned}
    {x}_{k+1} &\gets \Pi_{\mathcal{X}}\left(x_k-\alpha \nabla_x f(\hat x_{k-\hat\tau_k},\hat y_{k-\hat\tau_k}) \right) \\
    {y}_{k+1} &\gets \Pi_{\mathcal{Y}}\left(y_k+\alpha \nabla_x f(\hat x_{k-\hat\tau_k},\hat y_{k-\hat\tau_k})\right).
\end{aligned}
\label{eqn:end_point_d2}
\end{equation}
\EndFor
\end{algorithmic}
\end{algorithm} 
\label{sec:Proofs}

\begin{lemma} 
\label{lemma:basic}
For the \texttt{DEG} algorithm, the following inequalities hold for all $k\in [T], x\in \mathcal{X}$, and $y\in\mathcal{Y}$:\footnote{Given a positive integer $N$, we use $[N]$ to represent the set $\{1, \ldots, N\}.$}
\begin{equation}
\resizebox{1\hsize}{!}{$
    \begin{aligned}
    2\alpha \langle \nabla_x f(x_{k-\tau_k},y_{k-\tau_k}), \hat{x}_k-x \rangle &\leq {\Vert x-x_k \Vert}^2 - {\Vert x-\hat{x}_k \Vert}^2 - {\Vert \hat{x}_k-x_k \Vert}^2 \\ 
    -2\alpha \langle \nabla_y f(x_{k-\tau_k},y_{k-\tau_k}), \hat{y}_k-y \rangle &\leq {\Vert y-y_k \Vert}^2 - {\Vert y-\hat{y}_k \Vert}^2 - {\Vert \hat{y}_k-y_k \Vert}^2 \\ 
    2\alpha \langle \nabla_x f(\hat x_{k-\hat\tau_k},\hat y_{k-\hat\tau_k}), {x}_{k+1}-x \rangle &\leq {\Vert x-x_k \Vert}^2 - {\Vert x-{x}_{k+1} \Vert}^2 - {\Vert {x}_{k+1}-x_k \Vert}^2 \\ 
   - 2\alpha \langle \nabla_y f(\hat x_{k-\hat\tau_k},\hat y_{k-\hat\tau_k}), {y}_{k+1}-y \rangle &\leq {\Vert y-y_k \Vert}^2 - {\Vert y-{y}_{k+1} \Vert}^2 - {\Vert {y}_{k+1}-y_k \Vert}^2.
    \end{aligned}
$}
\nonumber
\label{eqn:basic_relations}
\end{equation}
\end{lemma}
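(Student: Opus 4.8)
The plan is to derive all four inequalities from the single, standard fact about Euclidean projections onto a convex set: if $z^+ = \Pi_{\mathcal{C}}(z - \alpha g)$ for a convex set $\mathcal{C}$, then for every $w \in \mathcal{C}$ one has $2\alpha\langle g, z^+ - w\rangle \le \Vert w - z\Vert^2 - \Vert w - z^+\Vert^2 - \Vert z^+ - z\Vert^2$. This is nothing but the ``three-point'' or ``generalized Pythagoras'' inequality, which follows from the non-expansiveness of the projection together with the first-order optimality condition $\langle z - \alpha g - z^+, w - z^+\rangle \le 0$ for all $w \in \mathcal{C}$. I would first state and prove this lemma-in-miniature (one short line), and then instantiate it four times.

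First I would apply it with $z = x_k$, $g = \nabla_x f(x_{k-\tau_k}, y_{k-\tau_k})$, $z^+ = \hat{x}_k$, $\mathcal{C} = \mathcal{X}$, and $w = x$; this yields the first inequality verbatim. Next I apply it on the $\mathcal{Y}$-block, where the ascent step writes $\hat{y}_k = \Pi_{\mathcal{Y}}(y_k + \alpha\nabla_y f(\cdot))$, i.e. with ``$g$'' replaced by $-\nabla_y f(x_{k-\tau_k}, y_{k-\tau_k})$; substituting gives $2\alpha\langle -\nabla_y f, \hat{y}_k - y\rangle \le \Vert y - y_k\Vert^2 - \Vert y - \hat{y}_k\Vert^2 - \Vert \hat{y}_k - y_k\Vert^2$, which is exactly the second inequality. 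The third and fourth inequalities are obtained in the identical way, now taking $z = x_k$ (resp.\ $z = y_k$), $z^+ = x_{k+1}$ (resp.\ $z^+ = y_{k+1}$), and the gradient evaluated at the stale midpoint $(\hat{x}_{k-\hat\tau_k}, \hat{y}_{k-\hat\tau_k})$, with the sign flip again accounting for the ascent direction on the $y$-block.

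There is no real obstacle here: the result is a direct translation of \cite[Lemma 1]{adibi2022distributed} to the present notation, and it does not use Assumptions~\ref{ass:boundedgrad} or \ref{ass:smoothness} at all, nor does it use the delay bound — it is a purely geometric consequence of the projection steps in \eqref{eqn:mid_point_d}--\eqref{eqn:end_point_d2} and the convexity of $\mathcal{X}$ and $\mathcal{Y}$. The only point requiring a modicum of care is bookkeeping the sign conventions on the maximization (ascent) block so that the left-hand sides come out with the stated signs; once the base projection inequality is in place, each of the four lines is a one-step substitution.

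\begin{proof}
We use the following elementary property of Euclidean projection onto a nonempty closed convex set $\mathcal{C}$: for any $z$, any step-size $\alpha > 0$, and any vector $g$, if $z^{+} = \Pi_{\mathcal{C}}(z - \alpha g)$, then
\begin{equation}
2\alpha \langle g, z^{+} - w \rangle \leq \Vert w - z \Vert^2 - \Vert w - z^{+} \Vert^2 - \Vert z^{+} - z \Vert^2, \quad \forall w \in \mathcal{C}.
\nonumber
\end{equation}
Indeed, the first-order optimality condition for the projection gives $\langle z - \alpha g - z^{+}, w - z^{+} \rangle \leq 0$ for all $w \in \mathcal{C}$, i.e. $2\alpha \langle g, z^{+} - w \rangle \leq 2\langle z - z^{+}, z^{+} - w \rangle$, and the right-hand side equals $\Vert w - z \Vert^2 - \Vert w - z^{+} \Vert^2 - \Vert z^{+} - z \Vert^2$ by the identity $2\langle a, b\rangle = \Vert a+b\Vert^2 - \Vert a\Vert^2 - \Vert b\Vert^2$ with $a = z - z^{+}$ and $b = z^{+} - w$.

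The four claimed inequalities now follow by instantiating this property along the updates \eqref{eqn:mid_point_d}--\eqref{eqn:end_point_d2}. For the first, take $\mathcal{C} = \mathcal{X}$, $z = x_k$, $g = \nabla_x f(x_{k-\tau_k}, y_{k-\tau_k})$, $z^{+} = \hat{x}_k$, and $w = x \in \mathcal{X}$. For the second, take $\mathcal{C} = \mathcal{Y}$, $z = y_k$, $z^{+} = \hat{y}_k$, $w = y \in \mathcal{Y}$, and note that the ascent update $\hat{y}_k = \Pi_{\mathcal{Y}}(y_k + \alpha \nabla_y f(x_{k-\tau_k}, y_{k-\tau_k}))$ corresponds to $g = -\nabla_y f(x_{k-\tau_k}, y_{k-\tau_k})$, which yields the stated left-hand side $-2\alpha \langle \nabla_y f(x_{k-\tau_k}, y_{k-\tau_k}), \hat{y}_k - y\rangle$. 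The third inequality follows with $\mathcal{C} = \mathcal{X}$, $z = x_k$, $z^{+} = x_{k+1}$, $g = \nabla_x f(\hat{x}_{k-\hat\tau_k}, \hat{y}_{k-\hat\tau_k})$, $w = x \in \mathcal{X}$; and the fourth with $\mathcal{C} = \mathcal{Y}$, $z = y_k$, $z^{+} = y_{k+1}$, $g = -\nabla_y f(\hat{x}_{k-\hat\tau_k}, \hat{y}_{k-\hat\tau_k})$, $w = y \in \mathcal{Y}$. This establishes all four inequalities for every $k \in [T]$, $x \in \mathcal{X}$, and $y \in \mathcal{Y}$.
\end{proof}
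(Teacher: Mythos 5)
Your proof is correct and follows essentially the same route as the paper's: both derive the bound from the first-order optimality condition of the projection combined with the polarization identity, the paper writing this out for the first inequality and noting the others follow analogously, while you package it as a single three-point lemma instantiated four times with the sign flip on the ascent block. No gaps.
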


Next, to bound the impact of delays, we introduce the following error vectors:
\begin{align*}
    e_x({x}_k,{y}_k) \triangleq \nabla_x f( x_{k-\tau_k}, y_{k-\tau_k})-\nabla_x f({x}_k,{y}_k),\\  e_y({x}_k,{y}_k) \triangleq \nabla_y f( x_{k-\tau_k},y_{k-\tau_k})-\nabla_y f({x}_k,{y}_k),
\end{align*}
and
\begin{align*}
    e_x(\hat{x}_k,\hat{y}_k) \triangleq \nabla_x f(\hat x_{k-\hat\tau_k},\hat y_{k-\hat\tau_k})-\nabla_x f(\hat{x}_k,\hat{y}_k),\\  e_y(\hat{x}_k,\hat{y}_k) \triangleq \nabla_y f(\hat x_{k-\hat\tau_k},\hat y_{k-\hat\tau_k})-\nabla_y f(\hat{x}_k,\hat{y}_k).
\end{align*}

Let $D=\max\{D_x,D_y\}$, where $D_x$ and $D_y$ are the diameters of the sets $\mathcal{X}$ and $\mathcal{Y}$, respectively. Leveraging Lemma \ref{lemma:basic}, our next result tracks the progress made by the mid-point sequence $\{(\hat{x}_k, \hat{y}_k)\}$ generated by \texttt{DEG}. The proof of this result mirrors that of \cite[Lemma 2]{adibi2022distributed}. 
 
\begin{lemma} 
\label{lemma:midpoints} Suppose Assumptions \ref{ass:boundedgrad} and \ref{ass:smoothness} hold. Furthermore, suppose $\alpha \leq 1/(2L)$. Then, for the \texttt{DEG} algorithm, the following holds for all $k\in [T], x\in \mathcal{X}$, and $y\in \mathcal{Y}$:
\begin{equation}
\resizebox{1\hsize}{!}{$
    \begin{aligned}
    & \alpha \langle \nabla_x f(\hat{x}_k, \hat{y}_k), \hat{x}_k - x \rangle - \alpha \langle \nabla_y f(\hat{x}_k, \hat{y}_k), \hat{y}_k - y \rangle \\
    & \leq \frac{1}{2} \left( {\Vert x-x_k \Vert}^2 - {\Vert x-x_{k+1} \Vert}^2 + {\Vert y-y_k \Vert}^2 - {\Vert y-y_{k+1} \Vert}^2\right)\\
    & + \alpha D \left(\Vert e_x(x_k, y_k) \Vert + \Vert e_x(\hat{x}_k, \hat{y}_k) \Vert + \Vert e_y(x_k, y_k) \Vert + \Vert e_y(\hat{x}_k, \hat{y}_k) \Vert \right).
    \end{aligned}
\nonumber
$}
\label{eqn:mid_point_bnd}
\end{equation}
\end{lemma}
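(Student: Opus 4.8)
The plan is to combine the four basic projection inequalities from Lemma~\ref{lemma:basic}, add two of them together (the two mid-point inequalities and the two end-point inequalities), and then convert the delayed gradients appearing there into the true gradients at $(\hat{x}_k,\hat{y}_k)$ by absorbing the differences into the error terms $e_x,e_y$. The starting observation is that the mid-point inequalities in Lemma~\ref{lemma:basic} involve $\nabla_x f(x_{k-\tau_k},y_{k-\tau_k})$ and the end-point inequalities involve $\nabla_x f(\hat{x}_{k-\hat\tau_k},\hat{y}_{k-\hat\tau_k})$; neither is the quantity $\nabla f(\hat{x}_k,\hat{y}_k)$ we want on the left. So the first step is to add the third inequality (for $x_{k+1}$) to the second pair evaluated at the point $x = x_{k+1}$ — i.e. use the mid-point inequality with the free variable chosen as $x_{k+1}$ — in the standard extragradient telescoping trick, so that the stale-gradient terms combine into something controllable.

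Concretely, I would first apply the mid-point inequalities of Lemma~\ref{lemma:basic} with the free point set to $x_{k+1}$ and $y_{k+1}$, giving bounds on $2\alpha\langle \nabla_x f(x_{k-\tau_k},y_{k-\tau_k}), \hat{x}_k - x_{k+1}\rangle$ and similarly for $y$. Then I would add these to the end-point inequalities (third and fourth in Lemma~\ref{lemma:basic}) with the free point kept as the arbitrary $x,y$. After rearranging, the left-hand side becomes, up to error terms, $2\alpha\langle \nabla_x f(\hat{x}_{k-\hat\tau_k},\hat{y}_{k-\hat\tau_k}), \hat{x}_k - x\rangle$ and its $y$-counterpart, while the right-hand side telescopes to $\Vert x - x_k\Vert^2 - \Vert x - x_{k+1}\Vert^2 + \Vert y - y_k\Vert^2 - \Vert y - y_{k+1}\Vert^2$ plus cross terms like $\langle \hat{x}_k - x_{k+1}, \cdot\rangle$ and the negative squared-norm terms $-\Vert \hat{x}_k - x_k\Vert^2 - \Vert x_{k+1} - x_k\Vert^2$ etc. Next I would replace $\nabla f(\hat{x}_{k-\hat\tau_k},\hat{y}_{k-\hat\tau_k})$ by $\nabla f(\hat{x}_k,\hat{y}_k) + e_{\cdot}(\hat{x}_k,\hat{y}_k)$, and handle the residual cross terms $2\alpha\langle \nabla f(\hat{x}_{k-\hat\tau_k},\hat{y}_{k-\hat\tau_k}) - \nabla f(x_{k-\tau_k},y_{k-\tau_k}), \hat{x}_k - x_{k+1}\rangle$ that arise from the telescoping: here I would write $\nabla f(\hat{x}_{k-\hat\tau_k},\hat{y}_{k-\hat\tau_k}) - \nabla f(x_{k-\tau_k},y_{k-\tau_k}) = [\nabla f(\hat{x}_k,\hat{y}_k) - \nabla f(x_k,y_k)] + e(\hat{x}_k,\hat{y}_k) - e(x_k,y_k)$ and use $L$-smoothness (Assumption~\ref{ass:smoothness}) to bound $\Vert \nabla f(\hat{x}_k,\hat{y}_k) - \nabla f(x_k,y_k)\Vert \le L(\Vert \hat{x}_k - x_k\Vert + \Vert \hat{y}_k - y_k\Vert)$, so that the step-size condition $\alpha \le 1/(2L)$ lets the resulting $\alpha L (\Vert \hat{x}_k - x_k\Vert + \Vert \hat{y}_k - y_k\Vert)\Vert \hat{x}_k - x_{k+1}\Vert$ pieces be dominated by the negative quadratic terms $-\Vert \hat{x}_k - x_k\Vert^2$, $-\Vert x_{k+1} - x_k\Vert^2$ via Young's inequality.

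The remaining error contributions — those of the form $2\alpha\langle e_{\cdot}(\cdot,\cdot), \hat{x}_k - x\rangle$ and $2\alpha\langle e_{\cdot}(\cdot,\cdot), \hat{x}_k - x_{k+1}\rangle$ — I would bound crudely by Cauchy–Schwarz and the diameter: $\Vert \hat{x}_k - x\Vert \le D$ and $\Vert \hat{x}_k - x_{k+1}\Vert \le D$ since all iterates lie in $\mathcal{X}\times\mathcal{Y}$ (by the projection steps), collecting everything into the advertised $\alpha D(\Vert e_x(x_k,y_k)\Vert + \Vert e_x(\hat{x}_k,\hat{y}_k)\Vert + \Vert e_y(x_k,y_k)\Vert + \Vert e_y(\hat{x}_k,\hat{y}_k)\Vert)$ term after checking the constants work out. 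Dividing by $2$ gives the stated inequality. The main obstacle I anticipate is the bookkeeping in the telescoping step: one has to be careful that the choice of free variable ($x_{k+1}$ versus $x$) is made consistently so that the stale gradients at the mid-point cancel correctly against those in the end-point step, and that every leftover cross term is either a genuine error term (bounded via $D$) or a smoothness term that the negative quadratics can absorb under $\alpha \le 1/(2L)$ — this is exactly the structure of the classical extragradient proof, but the two independent delays $\tau_k \ne \hat\tau_k$ mean the error decomposition must be done twice (once for each gradient evaluation point) rather than once, and one should double-check no term is left that depends on $\hat{x}_{k-\hat\tau_k}$ directly rather than through a bounded error.
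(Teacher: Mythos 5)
Your proposal is correct and follows essentially the same route as the paper, which proves this lemma by the standard extragradient telescoping argument with delay-induced perturbations (the paper defers to Lemma~2 of \cite{adibi2022distributed}): apply the mid-point inequalities of Lemma~\ref{lemma:basic} at the free point $(x_{k+1},y_{k+1})$, add the end-point inequalities, absorb the smoothness cross-term into the negative quadratics using $\alpha\le 1/(2L)$, and bound the error inner products by $D$. One bookkeeping note: to land exactly on the stated coefficient $\alpha D$ per error norm, keep $e_x(\hat{x}_k,\hat{y}_k)$ paired with $x_{k+1}-x$ and $e_x(x_k,y_k)$ paired with $\hat{x}_k-x_{k+1}$ (each of norm at most $D$), rather than recentering both onto $\hat{x}_k-x$ and $\hat{x}_k-x_{k+1}$, which would double the coefficient of $\Vert e_x(\hat{x}_k,\hat{y}_k)\Vert$ — an immaterial constant either way.
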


The above result sets things up nicely for a telescoping-sum analysis. However, the missing piece right now is to provide bounds on the delay-induced errors. We derive such bounds in the following result. 

\begin{lemma} \label{lemma:error} Suppose Assumptions \ref{ass:boundedgrad} and \ref{ass:smoothness} hold. For the \texttt{DEG} algorithm, the following error-bounds then apply $\forall k \in [T]$: 
\begin{equation}
\resizebox{1\hsize}{!}{$
 \max\{\Vert e_x(x_k,y_k) \Vert, \Vert e_x(\hat{x}_k,\hat{y}_k) \Vert, \Vert e_y(x_k,y_k) \Vert, \Vert e_y(\hat{x}_k,\hat{y}_k) \Vert\} \leq \Delta_T, 
 $}
 \nonumber
 \end{equation}
where $\Delta_T =6\alpha GL \tau_{\max}$. 
\end{lemma}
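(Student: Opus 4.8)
\textbf{Proof proposal for Lemma~\ref{lemma:error}.}

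The plan is to bound each of the four error vectors by the same quantity $\Delta_T = 6\alpha G L \tau_{\max}$ using Assumption~\ref{ass:smoothness} (Lipschitzness) to convert a gradient difference into an iterate difference, and then Assumption~\ref{ass:boundedgrad} (bounded gradients) together with the non-expansiveness of the projection to control how far the iterates can drift over $\tau_{\max}$ steps. I will treat $e_x(x_k,y_k)$ in detail; the other three are entirely analogous. By the definition $e_x(x_k,y_k) = \nabla_x f(x_{k-\tau_k},y_{k-\tau_k}) - \nabla_x f(x_k,y_k)$ and Assumption~\ref{ass:smoothness}, we have $\Vert e_x(x_k,y_k)\Vert \leq L(\Vert x_k - x_{k-\tau_k}\Vert + \Vert y_k - y_{k-\tau_k}\Vert)$. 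So it suffices to show each of $\Vert x_k - x_{k-\tau_k}\Vert$ and $\Vert y_k - y_{k-\tau_k}\Vert$ is at most $3\alpha G\tau_{\max}$; then summing gives $L \cdot 6\alpha G\tau_{\max} = \Delta_T$, with a comfortable margin even if the two pieces combine more crudely.

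First I would establish a one-step bound on consecutive iterates: $\Vert x_{j+1} - x_j\Vert \leq \alpha\Vert \nabla_x f(\hat x_{j-\hat\tau_j},\hat y_{j-\hat\tau_j})\Vert \leq \alpha G$, using that $x_j = \Pi_{\mathcal{X}}(x_j)$ (since $x_j \in \mathcal{X}$), the update rule Eq.~\eqref{eqn:end_point_d2}, non-expansiveness of $\Pi_{\mathcal{X}}$, and Assumption~\ref{ass:boundedgrad}. The same bound holds for $\Vert y_{j+1}-y_j\Vert$. Then, by the triangle inequality telescoped over the at most $\tau_{\max}$ steps separating index $k-\tau_k$ and $k$,
\begin{equation*}
\Vert x_k - x_{k-\tau_k}\Vert \leq \sum_{j=k-\tau_k}^{k-1} \Vert x_{j+1}-x_j\Vert \leq \tau_k \alpha G \leq \alpha G \tau_{\max},
\end{equation*}
and similarly for $y$. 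Hence $\Vert e_x(x_k,y_k)\Vert \leq L(\alpha G\tau_{\max} + \alpha G\tau_{\max}) = 2\alpha GL\tau_{\max} \leq \Delta_T$.

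For the mid-point errors $e_x(\hat x_k,\hat y_k)$ and $e_y(\hat x_k,\hat y_k)$, I need the analogous drift bound $\Vert \hat x_k - \hat x_{k-\hat\tau_k}\Vert \leq$ (something)$\cdot\alpha G\tau_{\max}$. Here I would first bound $\Vert \hat x_j - x_j\Vert \leq \alpha\Vert\nabla_x f(x_{j-\tau_j},y_{j-\tau_j})\Vert \leq \alpha G$ from Eq.~\eqref{eqn:mid_point_d} (again using $x_j\in\mathcal{X}$ and non-expansiveness), and combine with the one-step iterate bound via $\Vert\hat x_{j+1}-\hat x_j\Vert \leq \Vert\hat x_{j+1}-x_{j+1}\Vert + \Vert x_{j+1}-x_j\Vert + \Vert x_j - \hat x_j\Vert \leq 3\alpha G$; telescoping over $\hat\tau_k \leq \tau_{\max}$ steps yields $\Vert\hat x_k - \hat x_{k-\hat\tau_k}\Vert \leq 3\alpha G\tau_{\max}$, and likewise for $\hat y$. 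Then Assumption~\ref{ass:smoothness} gives $\Vert e_x(\hat x_k,\hat y_k)\Vert \leq L(3\alpha G\tau_{\max} + 3\alpha G\tau_{\max}) = 6\alpha GL\tau_{\max} = \Delta_T$. Taking the maximum over all four bounds gives the claim. The only mildly delicate point — the main obstacle, such as it is — is getting the constant to come out at $6$ rather than something larger: one must be slightly careful that the mid-point drift is controlled by chaining through the true iterates (so that the already-established one-step bound can be reused) rather than trying to bound $\hat x$-differences directly, and one should note the edge case $k - \tau_k < 1$, which does not arise because $x_1$ is the initial iterate and all referenced stale indices are $\geq 1$ by the algorithm's construction (equivalently, one may pad with $\tau_k \leq k-1$).
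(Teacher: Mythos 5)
Your proposal is correct and follows essentially the same route as the paper: Lipschitzness (Assumption~\ref{ass:smoothness}) reduces each error to an iterate drift, the drift is controlled by the per-step bound $\alpha G$ from projection non-expansiveness and Assumption~\ref{ass:boundedgrad}, telescoped over at most $\tau_{\max}$ steps, with the mid-point terms handled via $\Vert \hat{x}_j - x_j \Vert \leq \alpha G$. The only (immaterial) difference is bookkeeping for the mid-point drift: you telescope a $3\alpha G$ per-step bound to get $3\alpha G \tau_{\max}$, whereas the paper chains $\hat{x}_k \to x_k \to x_{k-\hat{\tau}_k} \to \hat{x}_{k-\hat{\tau}_k}$ to get $\alpha G(\tau_{\max}+2)$ and then uses $\tau_{\max} \geq 1$; both land on the same constant $6$.
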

\begin{proof}
In what follows, we only show how to bound $\|e_x(x_k,y_k)\|$ and $\|e_x(\hat x_k,\hat y_k) \|$; bounds for the other two error terms can be derived in an identical manner. We start by bounding $\|e_x(x_k,y_k)\|$. From equation \eqref{eqn:end_point_d2}, we have 
\begin{equation}
\begin{aligned}
\|x_k-x_{k-\tau_k}\|&\leq \sum_{j=k-\tau_k}^{k-1} \Vert x_{j+1} - x_{j} \Vert\\ 
& \overset{(a)}\leq \alpha \left(\sum_{j=k-\tau_k}^{k-1}\|\nabla_x f(\hat x_{j-\hat\tau_j},\hat y_{j-\hat\tau_j})\|\right) \\& \overset{(b)} \leq \alpha G\tau_{\max},\label{eq:dif time}
\end{aligned}
\end{equation}
where (a) follows from the non-expansive property of the projection operator, and (b) follows from Assumption~\ref{ass:boundedgrad} and the fact that $\tau_k \leq \tau_{\max}$. Using the exact same steps, one can establish the same bound for $\|y_k-y_{k-\tau_k}\|$. Thus, we have
\begin{equation}
\begin{aligned}
\|e_x(x_k,y_k)\|&=\|\nabla_x f(x_k,y_k)-\nabla_x f(x_{k-\tau},y_{k-\tau})\|\\&\overset{(a)}\leq L
(\|x_k-x_{k-\tau_k}\|+\|y_k-y_{k-\tau_k}\|)\\&\overset{(b)}\leq 2 \alpha GL{\tau_{\max}},\label{eq:bound ex}
\end{aligned}
\end{equation}
where (a) follows from smoothness, i.e., Assumption~\ref{ass:smoothness}, and (b) follows from Eq.~\eqref{eq:dif time}. Now to bound $e_x(\hat x_k,\hat y_k)$, observe  
\begin{equation}
\resizebox{1\hsize}{!}{$
\begin{aligned}
\|e_x(\hat x_k,\hat y_k)\|&=\|\nabla_x f(\hat x_{k-\hat\tau_k},\hat y_{k-\hat\tau_k})-\nabla_x f(\hat{x}_k,\hat{y}_k)\|\\& \overset{(a)} 
 \leq L
(\|\hat x_k-\hat x_{k-\hat\tau_k}\|+\|\hat y_k-\hat y_{k-\hat\tau_k}\|)\\&\leq L
(\|\hat x_k-x_k\|+\|x_{k}- x_{k-\hat\tau_k}\| + \|x_{k-\hat\tau_k}-\hat x_{k-\hat\tau_k}\| 
\notag\\&+\|\hat y_k-y_k\|+\| y_k - y_{k-\hat\tau_k}\|)+\|y_{k-\hat\tau_k}-\hat y_{k-\hat\tau_k}\|)\\& \overset{(b)} \leq 2\alpha GL{\tau_{\max}}+4\alpha GL
\\&\overset{(c)} \leq 6 \alpha GL{\tau_{\max}}.
\end{aligned}
$}
\end{equation}
In the above steps, (a) follows from Assumption \ref{ass:smoothness}, (b) follows from \eqref{eq:dif time} and the fact that for any $j \in [T]$, $\|\hat x_j-x_j\|\leq \alpha\|\nabla_x f(x_{j-\tau_j},y_{j-\tau_j})\|\leq \alpha G $, and (c) follows from noting that $\tau_{\max} \geq 1$. This concludes the proof. 
\end{proof}

We are now in a position to prove our first main result which establishes complexity bounds for \texttt{DEG} for smooth convex-concave functions with bounded gradients. 

\begin{theorem}
\label{thm:DEGCC}
    Suppose Assumptions  \ref{ass:boundedgrad} and \ref{ass:smoothness} hold. Moreover, suppose the number of iterations $T$ is large enough such that $T \geq L$. Then, with 
    $$\alpha=\sqrt{\frac{1}{24GL \tau_{\max} T}         },$$
    the iterates generated by \texttt{DEG} satisfy: 
\begin{align}
\max_{y \in \mathcal{Y}} f(\bar{x}_T,y)- \min_{x \in \mathcal{X}} f(x, \bar{y}_T) \leq 10 D^{2      } \sqrt{\frac{GL\tau_{\max}}{T}}, 
\label{eqn:duality}
\end{align}
where $\bar{x}_T=(1/T)\sum_{k\in[T]} \hat{x}_k$ and $\bar{y}_T=(1/T)\sum_{k\in[T]} \hat{y}_k$.
\end{theorem}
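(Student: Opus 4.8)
The plan is to combine the per-iteration bound of Lemma~\ref{lemma:midpoints} with the uniform error bound of Lemma~\ref{lemma:error} in a standard telescoping-sum argument, and then convert the resulting averaged-iterate inequality into a duality-gap bound by evaluating at appropriate points. First I would fix an arbitrary $(x,y)\in\mathcal{X}\times\mathcal{Y}$ and sum the inequality of Lemma~\ref{lemma:midpoints} over $k=1,\dots,T$. The right-hand side contains the telescoping terms $\tfrac{1}{2}({\Vert x-x_k\Vert}^2-{\Vert x-x_{k+1}\Vert}^2+{\Vert y-y_k\Vert}^2-{\Vert y-y_{k+1}\Vert}^2)$, which collapse to at most $\tfrac{1}{2}({\Vert x-x_1\Vert}^2+{\Vert y-y_1\Vert}^2)\le D^2$ (using the diameter bound and dropping the negative terminal terms), plus the error terms, each of which is at most $\alpha D\Delta_T$ by Lemma~\ref{lemma:error}; summed over $T$ iterations and the four error vectors this contributes $4\alpha D\Delta_T T = 24\alpha^2 GL\tau_{\max}DT$.

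Next I would handle the left-hand side. Dividing by $\alpha T$, the averaged left side is $\tfrac{1}{T}\sum_k \big(\langle\nabla_x f(\hat x_k,\hat y_k),\hat x_k-x\rangle-\langle\nabla_y f(\hat x_k,\hat y_k),\hat y_k-y\rangle\big)$. By convexity of $f(\cdot,y)$ and concavity of $f(x,\cdot)$, this is lower-bounded by $\tfrac{1}{T}\sum_k\big(f(\hat x_k,\hat y_k)-f(x,\hat y_k)+f(\hat x_k,y)-f(\hat x_k,\hat y_k)\big)=\tfrac1T\sum_k\big(f(\hat x_k,y)-f(x,\hat y_k)\big)$, and then Jensen's inequality (again convex-concave) gives $\ge f(\bar x_T,y)-f(x,\bar y_T)$. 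Putting the two sides together yields $f(\bar x_T,y)-f(x,\bar y_T)\le \tfrac{D^2}{\alpha T}+24\alpha GL\tau_{\max}D^2$ for every $(x,y)$, hence the same bound for the duality gap $\max_y f(\bar x_T,y)-\min_x f(x,\bar y_T)$.

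Finally I would plug in the prescribed step-size $\alpha=\sqrt{1/(24GL\tau_{\max}T)}$, which balances the two terms: $\tfrac{D^2}{\alpha T}=D^2\sqrt{24GL\tau_{\max}/T}$ and $24\alpha GL\tau_{\max}D^2=D^2\sqrt{24GL\tau_{\max}/T}$, for a total of $2\sqrt{24}\,D^2\sqrt{GL\tau_{\max}/T}\le 10D^2\sqrt{GL\tau_{\max}/T}$ since $2\sqrt{24}<10$. I would also check that the step-size condition $\alpha\le 1/(2L)$ needed for Lemma~\ref{lemma:midpoints} holds: this requires $24GL\tau_{\max}T\ge 4L^2$, i.e.\ $6G\tau_{\max}T\ge L$, which follows from $G>1$, $\tau_{\max}\ge1$ and the hypothesis $T\ge L$. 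The only mildly delicate points are being careful with the sign convention on the $y$-inner-product term (it already appears negated in Lemma~\ref{lemma:midpoints}, matching the concavity direction) and confirming that the telescoped distance terms are genuinely bounded by $D^2$ rather than depending on the (unknown) comparator $(x,y)$ — but since $x,x_1\in\mathcal{X}$ and $y,y_1\in\mathcal{Y}$ and $D$ bounds both diameters, this is immediate. I do not anticipate a serious obstacle; the argument is essentially the one from \cite[Theorem~1]{adibi2022distributed} with $\Delta_T=6\alpha GL\tau_{\max}$ substituted in.
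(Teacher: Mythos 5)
Your proposal is correct and follows essentially the same route as the paper's own proof: telescope the bound of Lemma~\ref{lemma:midpoints}, control the four delay-error terms via Lemma~\ref{lemma:error}, pass to the averaged iterates by convexity--concavity (Jensen), verify $\alpha \leq 1/(2L)$ from $T \geq L$, and balance the two terms with the prescribed step-size. The only slip is that after dividing by $\alpha T$ the error contribution is $24\alpha GL\tau_{\max}D$ (a single power of $D$), which you silently replace by $D^2$; this is harmless when $D\geq 1$ and is exactly the same slack the paper's stated constant $10D^2$ relies on.
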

\begin{proof}
Let us start by noting that when $T \geq L$, the choice of step-size above satisfies $\alpha \leq 1/(2L)$. Thus, we can invoke Lemma~\ref{lemma:midpoints}. From the convex-concave property of $f(x,y)$, the following inequalities hold $\forall k\in[T], x\in \mathcal{X}$, and $y\in\mathcal{Y}$:
\begin{equation}
    \begin{aligned}
    \alpha \left( f(\hat{x}_k,\hat{y}_k)-f(x,\hat{y}_k) \right) & \leq \alpha \langle \nabla_x f(\hat{x}_k, \hat{y}_k), \hat{x}_k - x \rangle\\
   - \alpha \left( f(\hat{x}_k,\hat{y}_k)-f(\hat{x}_k,y) \right) & \leq - \alpha \langle \nabla_y f(\hat{x}_k, \hat{y}_k), \hat{y}_k - y \rangle. 
    \end{aligned}
\nonumber
\end{equation}
Summing the two inequalities above, and using Lemmas \ref{lemma:midpoints} and \ref{lemma:error}, we obtain:
\begin{equation}
    \begin{split}
\alpha \left( f(\hat{x}_k,{y})-f(x,\hat{y}_k) \right) \leq \frac{1}{2} \left( {\Vert x-x_k \Vert}^2 - {\Vert x-x_{k+1} \Vert}^2 \right) 
\\ + \frac{1}{2} \left( {\Vert y-y_k \Vert}^2 - {\Vert y-y_{k+1} \Vert}^2\right) + 4\alpha D \Delta_T,
    \end{split}
\label{eqn:interimbnd}
\end{equation}
where $\Delta_T$ is as defined in Lemma~\ref{lemma:error}. 
From the convexity of $f(x,y)$ w.r.t. $x$ and concavity w.r.t. $y$, note that we have $f(\bar{x}_T,y) \leq (1/T) \sum_{k\in[T]} f(\hat{x}_k,y)$ and  $f(x,\bar{y}_T) \geq  (1/T) \sum_{k\in[T]} f(x,\hat{y}_k)$, respectively. Combining this with Eq.~\eqref{eqn:interimbnd}, we obtain
\begin{equation}
f(\bar{x}_T,y)-f(x, \bar{y}_T) \leq \frac{D^2}{\alpha T} + 4 D \Delta_T.
\end{equation}
The result follows by plugging into the above inequality the choice of $\alpha$ in the statement of the theorem, and by noting that the resulting bound holds for all $x\in \mathcal{X}$ and for all $y \in \mathcal{Y}$. This completes the proof. 
\end{proof}

\textbf{Discussion.} From Theorem~\ref{thm:DEGCC}, we conclude that for smooth convex-concave functions, \texttt{DEG} guarantees that the primal-dual gap converges to zero at a rate $O(\sqrt{\tau_{\max}}/{\sqrt{T}})$. The primal-dual gap is zero if and only if $(\bar{x}_T,\bar{y}_T)$ is a saddle point of $f(x,y)$ over the set $\mathcal{X} \times \mathcal{Y}$.  Thus, \texttt{DEG} also guarantees convergence to a saddle-point under delays. The important thing to note here is that the $O(1/T)$ convergence rate of \texttt{EG} gets significantly slackened in the presence of delays; whether this is an artifact of our analysis or fundamental is an open question. The $O(1/\sqrt{T})$ rate of \texttt{DEG} mirrors the rate of \texttt{GDA} in the absence of delays. In the following sections, we will further explore this connection. 

\section{Analysis of Delayed Gradient Descent-Ascent for Convex-Concave functions}
\label{sec:DGDA_CC}
\begin{algorithm}[t]
\caption{Delayed Gradient Descent-Ascent  (\texttt{DGDA})} 

\label{algo:DGDA}
\begin{algorithmic}[1]
\Require Initial vector $z_1=[x_1;y_1] \in \mathbb{R}^{m+n}$; algorithm parameters: step-size $\alpha > 0$.
\For {$k=1,\ldots,T$} 
\State\begin{equation}
\begin{aligned}
   z_{k+1}=z_{k}-\alpha\Phi(z_{k-\tau_k}). 
\end{aligned}
\label{eqn:GDA}
\end{equation}
\EndFor
\end{algorithmic}
\end{algorithm} 
In this section, we will examine the convergence of a delayed version of the gradient descent ascent algorithm that we refer to as \texttt{DGDA}. As before, we will continue to work under Assumptions~\ref{ass:boundedgrad} and \ref{ass:smoothness}. However, we will set $\mathcal{X}=\mathbb{R}^m$ and $\mathcal{Y}=\mathbb{R}^n$, i.e., as a departure from the previous section, the domains of the variables $x$ and $y$ are no longer assumed to be bounded. As we shall soon see, this makes the analysis more challenging relative to that in Section~\ref{section:DEG_CC}. 

To proceed, given any $x \in \mathbb{R}^m$ and $y \in \mathbb{R}^n$, we will find it convenient to define a new variable $z = [x;y]$ that resides in $\mathbb{R}^{m+n}$. Next, corresponding to any $z = [x;y]$, let us define the function $\Phi: \mathbb{R}^{m+n} \rightarrow \mathbb{R}^{m+n}$ as follows:
\begin{equation}
\Phi(z)=\begin{bmatrix}
\nabla_x f(x,y)\\
-\nabla_y f(x,y)
\end{bmatrix},
\label{eq:Phi}
\end{equation}
With these notations in place, we outline the steps of \texttt{DGDA} in Algorithm~\ref{algo:DGDA}; the steps are self-explanatory.

\textbf{Analysis of \texttt{DGDA}}. In our analysis, we will make use of the following result from \cite{nemirov}, stated for our purpose. 

\begin{lemma}\label{lemma:nemirov} Let $\Phi(z)$ be as defined in Eq.~\eqref{eq:Phi}, and suppose Assumption \ref{ass:smoothness} holds for all $z\in \mathbb{R}^{m+n}$. Then, the following statements are true for any $z_1,z_2\in \mathbb{R}^{m+n}$:
    \begin{enumerate}
\item  $\langle\Phi(z_1)-\Phi(z_2),z_1-z_2\rangle \geq 0,$
\item  For any saddle-point $z^*=[x^*; y^*]$ of $f(x,y)$, we have $\Phi(z^*)=0$. 
    \end{enumerate}
\end{lemma}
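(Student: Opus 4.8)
The plan is to treat the two claims separately; the first (monotonicity of $\Phi$) is the substantive part, while the second is an immediate consequence of first-order optimality. Throughout I will only need differentiability and the standing convex-concave structure of $f$; Assumption~\ref{ass:smoothness} is not actually essential for either claim.

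For claim 1, write $z_1 = [x_1;y_1]$ and $z_2 = [x_2;y_2]$, so that
\[
\langle \Phi(z_1) - \Phi(z_2), z_1 - z_2 \rangle = \langle \nabla_x f(x_1,y_1) - \nabla_x f(x_2,y_2), x_1 - x_2 \rangle - \langle \nabla_y f(x_1,y_1) - \nabla_y f(x_2,y_2), y_1 - y_2 \rangle.
\]
The idea is to bound both inner products in terms of the single scalar
\[
C := f(x_2,y_1) + f(x_1,y_2) - f(x_1,y_1) - f(x_2,y_2),
\]
which records the ``cross'' increments of $f$ at the mixed points $(x_1,y_2)$ and $(x_2,y_1)$. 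Applying the first-order inequality for convexity of $f(\cdot,y_1)$ (at the pair $x_1,x_2$) and of $f(\cdot,y_2)$ (at the pair $x_2,x_1$) and adding them, I get a lower bound $C \geq -\langle \nabla_x f(x_1,y_1) - \nabla_x f(x_2,y_2), x_1 - x_2 \rangle$. Symmetrically, applying the first-order inequality for concavity of $f(x_1,\cdot)$ and of $f(x_2,\cdot)$ and adding, I get an upper bound $C \leq -\langle \nabla_y f(x_1,y_1) - \nabla_y f(x_2,y_2), y_1 - y_2 \rangle$. Chaining the two bounds eliminates $C$ and leaves exactly $\langle \Phi(z_1) - \Phi(z_2), z_1 - z_2 \rangle \geq 0$.

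For claim 2, recall that in this section $\mathcal{X} = \mathbb{R}^m$ and $\mathcal{Y} = \mathbb{R}^n$, so a saddle point $z^* = [x^*;y^*]$ in the sense of Eq.~\eqref{eqn:saddle_point} has the property that $x^*$ is an \emph{unconstrained} global minimizer of $x \mapsto f(x,y^*)$ and $y^*$ is an \emph{unconstrained} global maximizer of $y \mapsto f(x^*,y)$. Since $f$ is continuously differentiable, first-order optimality forces $\nabla_x f(x^*,y^*) = 0$ and $\nabla_y f(x^*,y^*) = 0$, whence $\Phi(z^*) = [\nabla_x f(x^*,y^*);\, -\nabla_y f(x^*,y^*)] = 0$.

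The main obstacle is simply bookkeeping in claim 1: getting the four first-order inequalities lined up so that their combination sandwiches $C$ from both sides with the correct signs, so that $C$ cancels and a nonnegative quantity remains. This is the classical statement that the gradient operator of a convex-concave saddle function is monotone, so no genuine difficulty is expected — only care with the direction of each inequality.
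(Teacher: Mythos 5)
Your proof is correct. The paper itself does not prove this lemma---it imports it verbatim from the cited reference \cite{nemirov}---so there is no in-paper argument to compare against; your four-inequality sandwich of the cross-term $C = f(x_2,y_1)+f(x_1,y_2)-f(x_1,y_1)-f(x_2,y_2)$ is exactly the classical proof that the gradient operator of a convex-concave function is monotone, and all the inequality directions check out. Your side remarks are also accurate: Assumption~\ref{ass:smoothness} is indeed not needed (only differentiability and convex-concavity are used), and since Section~\ref{sec:DGDA_CC} takes $\mathcal{X}=\mathbb{R}^m$, $\mathcal{Y}=\mathbb{R}^n$, the unconstrained first-order optimality argument for $\Phi(z^*)=0$ is legitimate.
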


We start with a simple result that bounds the error $e_k \triangleq \Phi(z_k)-\Phi(z_{k-\tau_k})$ induced by delays as a function of the smoothness parameter $L$, the uniform bound on the gradients $G$, and the maximum delay $\tau_{\max}$. 

\begin{lemma}
\label{lemma:DGDAerr}
Suppose Assumptions \ref{ass:boundedgrad} and \ref{ass:smoothness} hold $ \forall z \in \mathbb{R}^{m+n}$. Then, for any $k\in [T]$, the delay-induced error $e_k \triangleq \Phi(z_k)-\Phi(z_{k-\tau_k})$ for \texttt{DGDA} satisfies
\begin{equation}
    \|e_k\|\leq 2 \alpha  L G \tau_{\max}. 
\end{equation}
\end{lemma}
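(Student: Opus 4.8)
The plan is to reuse, essentially verbatim, the first half of the argument in the proof of Lemma~\ref{lemma:error}: the delay-induced error $e_k = \Phi(z_k) - \Phi(z_{k-\tau_k})$ is controlled by first bounding how far the iterate $z_k$ can have drifted from the stale iterate $z_{k-\tau_k}$, and then turning that displacement bound into a gradient-error bound via the Lipschitz property of $\Phi$.

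First I would bound the iterate drift. From the \texttt{DGDA} update~\eqref{eqn:GDA}, consecutive iterates satisfy $z_{j+1} - z_j = -\alpha \Phi(z_{j-\tau_j})$, and since $\Phi(z) = [\nabla_x f(x,y); -\nabla_y f(x,y)]$, Assumption~\ref{ass:boundedgrad} bounds each block of $\Phi$ by $G$, so $\|z_{j+1}-z_j\| \le \alpha G$ blockwise (equivalently $\|z_{j+1}-z_j\|\le \sqrt{2}\,\alpha G$ in the stacked Euclidean norm). Then, by the triangle inequality along the chain from index $k-\tau_k$ to $k$, together with $\tau_k \le \tau_{\max}$,
\begin{equation}
\|z_k - z_{k-\tau_k}\| \le \sum_{j=k-\tau_k}^{k-1} \|z_{j+1} - z_j\| \le \alpha G \tau_{\max}
\nonumber
\end{equation}
(blockwise; here one uses the usual boundary convention $z_j = z_1$ for $j \le 1$ so the sum is well defined). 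Next I would invoke smoothness: Assumption~\ref{ass:smoothness} bounds each partial-gradient increment by $L(\|x_1-x_2\|+\|y_1-y_2\|)$, which, applied to the $x$- and $y$-blocks of $e_k$ with the drift bound just obtained, gives that each block of $e_k$ has norm at most $L(\alpha G \tau_{\max} + \alpha G \tau_{\max}) = 2\alpha L G \tau_{\max}$. Reading $\|e_k\|$ in the same (blockwise / norm-equivalence) convention the statement intends then yields $\|e_k\| \le 2\alpha L G \tau_{\max}$.

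I expect no substantive obstacle here — the result is a direct consequence of Assumptions~\ref{ass:boundedgrad} and~\ref{ass:smoothness} and the explicit form of the update. The only points needing a moment of care are the boundary convention for delayed indices $k-\tau_k \le 1$, and bookkeeping of the $\sqrt{2}$ factors incurred when passing between the stacked vector $z$ and its components $(x,y)$; handling $e_k$ blockwise (as above) keeps these constants clean and reproduces the stated bound.
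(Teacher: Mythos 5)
Your proposal is correct and follows essentially the same route as the paper: bound the iterate drift $\|z_k - z_{k-\tau_k}\|$ via the triangle inequality, the update rule, and the gradient bound $G$, then convert to a gradient-error bound via Assumption~\ref{ass:smoothness}. The only cosmetic difference is that you apply smoothness blockwise to the $x$- and $y$-components of $e_k$, whereas the paper first packages Assumption~\ref{ass:smoothness} into the Lipschitz estimate $\|\Phi(z)-\Phi(\hat z)\|\le 2L\|z-\hat z\|$ on the stacked variable; both yield the same constant (the paper itself absorbs the same $\sqrt{2}$ factors you flag, via $\|\Phi(\cdot)\|\le G$).
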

\begin{proof}
For any two points $z=[x;y]$ and $ \hat z=[\hat x; \hat y]$, we have
\begin{equation}
\begin{aligned}
    \|\Phi(z)-\Phi(\hat z)\|^2&\leq \overset{(a)} 2 (L(\|x-\hat x\|+\|y-\hat y\|))^2
    \\& \leq 4L^2\|z-\hat z\|^2,
    \label{eq:smooth phi}
\end{aligned}
\end{equation}
where we used Assumption \ref{ass:smoothness} for the first inequality. Based on the above inequality, we have 
\begin{equation}
\begin{aligned}
    \|e_k\|&=\|\Phi(z_{k-\tau_k}) -\Phi(z_{k})\|\\&\leq 2L\|z_{k-\tau_k} -z_{k}\|\\&\leq 2 L\sum_{j=k-\tau_k}^{k-1}\|z_{j+1} -z_{j}\|\\
    &\leq 2 \alpha L\sum_{j=k-\tau_k}^{k-1}\|\Phi(z_{j-\tau_j})\| 
    \leq 2 \alpha L G \tau_{\max},
\end{aligned}
\end{equation}
where the final step follows from Assumption \ref{ass:boundedgrad}.
\end{proof}

Unlike the analysis in Section~\ref{section:DEG_CC} where the boundedness of the domain implied bounded iterates, we need to do more work to establish that the iterates generated by \texttt{DGDA} remain bounded. Leveraging Lemma~\ref{lemma:DGDAerr}, the following result establishes this key fact.  

\begin{lemma}\label{lemma: main bound} Suppose Assumptions \ref{ass:boundedgrad} and \ref{ass:smoothness} hold $ \forall z \in \mathbb{R}^{m+n}$. Let $z^*=[x^*; y^*]$, and suppose the step-size $\alpha$ satisfies 
$$\alpha \leq \frac{1}{2 \sqrt{LG \tau_{\max} T}}.$$
Then, for the \texttt{DGDA} algorithm, the following holds $ \forall k\in [T]$: 
\begin{equation}
    \Vert z_k - z^* \Vert^2 \leq 10 B,  \hspace{1mm} \textrm{where} \hspace{1mm} B=\max\{\Vert z_1-z^* \Vert^2, G \}.
\end{equation}
\end{lemma}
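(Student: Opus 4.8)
The plan is to prove the bound by strong induction on $k$. The base case $k=1$ is immediate since $\Vert z_1 - z^* \Vert^2 \le B \le 10B$. For the inductive step, assume $\Vert z_j - z^* \Vert^2 \le 10B$ for all $j \le k$, and bound $\Vert z_{k+1} - z^* \Vert^2$. Starting from the update rule $z_{k+1} = z_k - \alpha \Phi(z_{k-\tau_k})$, write $z_{k+1} - z^* = (z_k - z^*) - \alpha \Phi(z_{k-\tau_k})$, and decompose $\Phi(z_{k-\tau_k}) = \Phi(z_k) + (\Phi(z_{k-\tau_k}) - \Phi(z_k)) = \Phi(z_k) - e_k$ using the notation of Lemma~\ref{lemma:DGDAerr}. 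Expanding the square gives
\begin{equation}
\Vert z_{k+1} - z^* \Vert^2 = \Vert z_k - z^* \Vert^2 - 2\alpha \langle \Phi(z_k), z_k - z^* \rangle + 2\alpha \langle e_k, z_k - z^* \rangle + \alpha^2 \Vert \Phi(z_{k-\tau_k}) \Vert^2.
\nonumber
\end{equation}

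Next I would control each term. For the cross term with $\Phi(z_k)$: by part (2) of Lemma~\ref{lemma:nemirov}, $\Phi(z^*) = 0$, so part (1) of that lemma gives $\langle \Phi(z_k), z_k - z^* \rangle = \langle \Phi(z_k) - \Phi(z^*), z_k - z^* \rangle \ge 0$; hence this term only helps and can be dropped. The last term is bounded using Assumption~\ref{ass:boundedgrad}: $\Vert \Phi(z_{k-\tau_k}) \Vert^2 \le 2G^2 \le 2G^2$ (since each partial gradient has norm at most $G$), so $\alpha^2 \Vert \Phi(z_{k-\tau_k}) \Vert^2 \le 2\alpha^2 G^2$. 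The term to be careful with is $2\alpha \langle e_k, z_k - z^* \rangle$, which I would bound via Cauchy-Schwarz by $2\alpha \Vert e_k \Vert \Vert z_k - z^* \Vert$; using Lemma~\ref{lemma:DGDAerr} this is at most $2\alpha \cdot 2\alpha L G \tau_{\max} \cdot \Vert z_k - z^* \Vert = 4\alpha^2 L G \tau_{\max} \Vert z_k - z^* \Vert$. Invoking the inductive hypothesis $\Vert z_k - z^* \Vert \le \sqrt{10B}$, this is at most $4\alpha^2 L G \tau_{\max} \sqrt{10B}$.

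Putting the pieces together, $\Vert z_{k+1} - z^* \Vert^2 \le \Vert z_k - z^* \Vert^2 + 4\alpha^2 L G \tau_{\max} \sqrt{10B} + 2\alpha^2 G^2$. The intended strategy is that the step-size bound $\alpha \le 1/(2\sqrt{LG\tau_{\max} T})$ makes $\alpha^2 \le 1/(4 LG\tau_{\max} T)$, so summing the incremental terms over all $T$ iterations contributes at most a constant; concretely, the per-step increase $4\alpha^2 LG\tau_{\max}\sqrt{10B} + 2\alpha^2 G^2 \le \frac{\sqrt{10B}}{T} + \frac{G}{2LT\tau_{\max}} \le \frac{\sqrt{10B}}{T} + \frac{B}{2T}$ (using $G \le B$ and $L,\tau_{\max} \ge 1$), and over $T$ steps the accumulated increase is at most $\sqrt{10B} + B/2 \le 9B$ once $B \ge 1$ (which holds since $G > 1$), giving $\Vert z_{k+1} - z^* \Vert^2 \le B + 9B = 10B$. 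The main obstacle — and the reason strong induction rather than a one-shot telescoping argument is needed — is that the bound on $\Vert e_k \Vert \Vert z_k - z^*\Vert$ requires an a priori bound on $\Vert z_k - z^* \Vert$ to begin with; the induction is what lets us use the target bound $10B$ on the right-hand side while proving it on the left, and one must check the arithmetic closes (i.e., that the accumulated slack stays below $9B$) uniformly in $k \le T$. I would also double-check the constant $\sqrt{2}$ factors in $\Vert \Phi(z_{k-\tau_k})\Vert \le \sqrt{2}\,G$ versus the cleaner $\le G$ that the table's $O$-notation hides, adjusting the "$10$" if needed, but the structure of the argument is unaffected.
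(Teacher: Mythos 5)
Your proof is correct, but it takes a different route from the paper's. The paper does not use induction: it bounds the cross term $2\alpha\langle e_k, z_k-z\rangle$ by Young's inequality, splitting it as $2\alpha^2 GL\tau_{\max} + 2\alpha^2 GL\tau_{\max}\Vert z_k-z\Vert^2$, which yields a linear recursion $r_{k+1}^2 \leq (1+2\alpha^2 LG\tau_{\max})\, r_k^2 + A$ that needs no a priori bound on the iterates; it then unrolls the recursion and uses the step-size condition to control the multiplicative blow-up via $(1+\tfrac{1}{2T})^T \leq e^{1/2} \leq 2$ and $A/(\beta-1) \leq 4G$. You instead keep the cross term additive by Cauchy--Schwarz together with a strong inductive hypothesis $\Vert z_j - z^*\Vert^2 \leq 10B$ for $j \leq k$, and close the bootstrap by checking that the accumulated slack over $T$ steps, $\sqrt{10B} + B/2$, stays below $9B$ (which holds since $B \geq G > 1$). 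Both arguments are valid and the constants work out in yours; the trade-off is that your argument is more elementary (no geometric-series/exponential estimate) but is tied to the comparison point $z^*$, where monotonicity of $\Phi$ (Lemma~\ref{lemma:nemirov}) lets you drop the inner-product term, whereas the paper's intermediate inequality \eqref{eqn:key_ineq} holds for an arbitrary comparison point $z$ and is reused verbatim in the proof of Theorem~\ref{thm:DGDACC}; if you adopted your route you would need a separate general-$z$ inequality there.
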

\begin{proof}
From Eq.~\eqref{eqn:GDA} and the definition of $e_k$, we have
\begin{equation}
\resizebox{1\hsize}{!}{$
\begin{aligned}
    \|z_{k+1}-z\|^2=&\|z_{k}-\alpha\Phi(z_{k})-z\|^2\\
    &+\alpha^2\|e_k\|^2+2\alpha\langle e_k,z_k-z-\alpha\Phi(z_{k})\rangle 
    \\=&\|z_{k}-z\|^2+\alpha^2\|\Phi(z_{k})\|^2-2\alpha\langle\Phi(z_{k}),z_{k}-z\rangle\\
    &+\alpha^2\|e_k\|^2+2\alpha\langle e_k,z_k-z-\alpha\Phi(z_{k})\rangle
    \\\leq&\|z_{k}-z\|^2+2\alpha^2G^2-2\alpha\langle\Phi(z_{k}),z_{k}-z\rangle\\  &+4\alpha^4G^2L^2\tau_{\max}^2 \underbrace{+2\alpha\langle e_k,z_k-z\rangle}_{T_1}\underbrace{-2\alpha^2\langle e_k,\Phi(z_{k})\rangle}_{T_2}.
\end{aligned}
$}
\label{eqn:error_decomp}
\end{equation}
We now proceed to bound $T_1$ and $T_2$. For $T_2$, we have: 
\begin{equation}
\begin{aligned}
    T_2 \overset{(a)} \leq & \alpha^2 \Vert e_k \Vert^2 + \alpha^2 \Vert \Phi(z_{k}) \Vert^2\\
   \overset{(b)} \leq & 4\alpha^4G^2L^2\tau_{\max}^2 + 2\alpha^2 G^2,
\end{aligned}
\nonumber
\end{equation}
where (a) follows from the elementary fact that for any two scalars $c, d \in \mathbb{R}$, it holds that
\begin{equation}
    cd \leq \frac{1}{2} c^2 + \frac{1}{2} d^2.
\label{eqn:basic_ineq}
\end{equation}
Moreover, for (b),  we used Lemma~\ref{lemma:DGDAerr} 
and Assumption \ref{ass:boundedgrad}. For bounding $T_1$, observe that 
\begin{equation}
\begin{aligned}
    T_1&=2\alpha\langle e_k,z_k-z\rangle\\&\leq 2\alpha \|e_k\|\|z_k-z\|\\&  \overset{(a)} \leq 4\alpha^2  G L \tau_{\max}\|z_k-z\|\\
    &=\left(2\alpha \sqrt{GL \tau_{\max}} \right) \left(2\alpha \sqrt{GL \tau_{\max}}  \Vert z_k - z \Vert \right)\\
    & \overset{(b)} \leq 2 \alpha^2 GL \tau_{\max} + 2 \alpha^2 GL \tau_{\max} \Vert z_k - z \Vert^2,
\end{aligned}
\end{equation}
where we again appealed to Lemma~\ref{lemma:DGDAerr} for (a). For (b), we used Eq.~\eqref{eqn:basic_ineq}. Plugging in the above bounds on $T_1$ and $T_2$ into Eq.~\eqref{eqn:error_decomp}, simplifying using $L, G \geq 1$, and rearranging terms, we arrive at the following inequality: 
\begin{equation}
\begin{aligned} 
2\alpha\langle\Phi(z_{k}),z_{k}-z\rangle  
  & \leq \left(1+2\alpha^2 L G \tau_{\max}\right) \|z_{k}-z\|^2\\ 
  & \hspace{2mm} - \|z_{k+1}-z\|^2+ A,
\end{aligned}
\label{eqn:key_ineq}
\end{equation}
where $A=2 \alpha^2 G L \tau_{\max} (1+2G+4\alpha^2 GL  \tau_{\max}).$ 
Now setting $z=z^*$ in the above inequality, and noting that $\langle\Phi(z_{k}),z_{k}-z^*\rangle \geq 0$ based on Lemma \ref{lemma:nemirov}, we obtain the following recursive inequality that holds for all $k \in [T]$: 
\begin{equation}
    \|z_{k+1}-z^*\|^2 \leq \left(1+2\alpha^2 L G \tau_{\max}\right) \|z_{k}-z^*\|^2 + A.
\end{equation} 
Defining $r_k \triangleq \|z_{k}-z^*\|$, $\beta \triangleq \left(1+2\alpha^2 L G \tau_{\max}\right)$,  and iterating the above inequality, we obtain:
\begin{equation}
\begin{aligned}
    r^2_{k} & \leq  \beta^{k-1} r^2_1 + \left( \sum_{j=0}^{k-2} \beta^{j}\right) A \\
            & \leq \beta^{k-1} r^2_1 + \frac{\beta^k}{\beta-1} A \\
            & \leq \beta^T r^2_1 + \frac{\beta^T}{\beta-1} A. 
\end{aligned}   
\end{equation}
We will now bound each of the terms above by using the elementary fact that for any $c \in \mathbb{R}$, it holds that $(1+c) \leq e^c$. When the step-size $\alpha$ satisfies 
$$ \alpha \leq \frac{1}{2 \sqrt{LG \tau_{\max} T}}, $$
we have
\begin{equation}
    \beta^T \leq \left(1+\frac{1}{2T}\right)^T \leq e^{0.5} \leq 2. 
\end{equation}
Furthermore, it is easy to see that 
$$ \frac{A}{\beta-1} \leq \left(1+2G+\frac{1}{T}\right) \leq 4G.$$
Combining the above bounds leads to the claim of the lemma. This concludes the proof. 
\end{proof}

Based on the above result, let us introduce a set $\mathcal{H}$ as follows:
\begin{equation} 
\mathcal{H} \triangleq \{z | \Vert z- z^* \Vert^2 \leq 10 B\}, 
\label{eqn:set}
\end{equation}
where $B=\max\{\Vert z_1-z^* \Vert^2, G \}$. From Lemma~\ref{lemma: main bound}, we note that as long as the step-size $\alpha$ is chosen appropriately, the iterate sequence $\{z_k\}$ generated by \texttt{DGDA} belongs to $\mathcal{H}$. Moreover, $z^* \in \mathcal{H}$ trivially. With these observations in place, we now prove our main convergence result for \texttt{DGDA} for smooth convex-concave functions with bounded gradients. 

\begin{theorem} 
\label{thm:DGDACC}
Suppose Assumptions \ref{ass:boundedgrad} and \ref{ass:smoothness} hold $\forall x \in \mathbb{R}^m$ and $\forall y \in \mathbb{R}^n$. Let the step-size be chosen to satisfy
    $$\alpha=\frac{1}{2\sqrt{ LG     \tau_{\max} T}}.$$
    Then, the iterates generated by \texttt{DGDA} satisfy: 
\begin{align}
\max_{y: (\bar{x}_T,y) \in \mathcal{H}} f(\bar{x}_T,y) - \min_{x:(x, \bar{y}_T) \in \mathcal{H}} f(x, \bar{y}_T) \leq 44B \sqrt{\frac{GL\tau_{\max}}{T}}, 
\label{eqn:duality2} 
\nonumber
\end{align}
where $\bar{x}_T=(1/T)\sum_{k\in[T]} \hat{x}_k$, $\bar{y}_T=(1/T)\sum_{k\in[T]} \hat{y}_k$, and the set $\mathcal{H}$ is as defined in Eq.~\eqref{eqn:set}. 
\end{theorem}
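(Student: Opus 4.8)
The plan is to mimic the telescoping-sum argument used for \texttt{DEG} in Theorem~\ref{thm:DEGCC}, but carried out inside the invariant set $\mathcal{H}$ so that we may treat $\Vert z_k - z^*\Vert$ and more generally $\Vert z_k - z\Vert$ (for $z\in\mathcal{H}$) as bounded by $\sqrt{10B}$ wherever a ``diameter'' factor is needed. The starting point is Eq.~\eqref{eqn:key_ineq} from the proof of Lemma~\ref{lemma: main bound}, which already gives, for any $z$,
\begin{equation}
2\alpha\langle\Phi(z_k),z_k-z\rangle \leq \left(1+2\alpha^2 LG\tau_{\max}\right)\Vert z_k-z\Vert^2 - \Vert z_{k+1}-z\Vert^2 + A,
\nonumber
\end{equation}
with $A=2\alpha^2 GL\tau_{\max}(1+2G+4\alpha^2 GL\tau_{\max})$. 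First I would rewrite the right-hand side as $\Vert z_k-z\Vert^2 - \Vert z_{k+1}-z\Vert^2 + 2\alpha^2 LG\tau_{\max}\Vert z_k - z\Vert^2 + A$, so that the first two terms telescope and the remaining two are ``error'' terms. For $z\in\mathcal{H}$ we have $\Vert z_k-z\Vert^2 \leq \left(\Vert z_k-z^*\Vert + \Vert z^*-z\Vert\right)^2 \leq \left(2\sqrt{10B}\right)^2 = 40B$ by Lemma~\ref{lemma: main bound}; likewise $A$ is $O(\alpha^2 G^2 L\tau_{\max})$ using $\alpha\le 1/(2\sqrt{LG\tau_{\max}T})\le 1$. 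So each step contributes an error of order $\alpha^2 LG\tau_{\max} B$.

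Next I would lower-bound the left-hand side using monotonicity and convexity–concavity. By part~(1) of Lemma~\ref{lemma:nemirov}, $\langle\Phi(z_k),z_k-z\rangle \geq \langle\Phi(z),z_k-z\rangle$, and writing $z=[x;y]$, $z_k=[x_k;y_k]$, the convexity of $f(\cdot,y)$ and concavity of $f(x,\cdot)$ give $\langle\Phi(z),z_k-z\rangle = \langle\nabla_x f(x,y),x_k-x\rangle - \langle\nabla_y f(x,y),y_k-y\rangle \geq \big(f(x_k,y)-f(x,y)\big) + \big(f(x,y)-f(x,y_k)\big) = f(x_k,y)-f(x,y_k)$. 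Actually, to land the averaged-iterate statement cleanly, the cleaner route is to instead lower-bound $\langle\Phi(z_k),z_k-z\rangle$ directly by $f(x_k,y)-f(x,y_k)$ using convexity-concavity at $z_k$ (exactly as in the \texttt{DEG} proof), which avoids invoking Lemma~\ref{lemma:nemirov} part~(1) at all. Summing over $k\in[T]$, telescoping, dropping the negative final term $-\Vert z_{T+1}-z\Vert^2$, and dividing by $2\alpha T$ yields
\begin{equation}
\frac{1}{T}\sum_{k\in[T]}\big(f(x_k,y)-f(x,y_k)\big) \leq \frac{\Vert z_1-z\Vert^2}{2\alpha T} + \alpha LG\tau_{\max}\cdot 40B + \frac{A}{2\alpha},
\nonumber
\end{equation}
and then Jensen's inequality ($f(\bar x_T,y)\le\frac1T\sum f(x_k,y)$ and $f(x,\bar y_T)\ge\frac1T\sum f(x,y_k)$) moves the average inside. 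Here I should flag a small bookkeeping point: the theorem defines $\bar x_T,\bar y_T$ via $\hat x_k,\hat y_k$, which is the \texttt{DEG} notation; for \texttt{DGDA} the intended averages are over $z_k=[x_k;y_k]$, so I would read $\hat x_k = x_k$ and likewise for $y$ (this appears to be a copy-paste artifact in the statement).

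Finally, I would plug in $\alpha = 1/(2\sqrt{LG\tau_{\max}T})$: the first term becomes $\Vert z_1-z\Vert^2\sqrt{LG\tau_{\max}/T}$, the second becomes $20B\sqrt{LG\tau_{\max}/T}$, and $A/(2\alpha)$ is of order $G\sqrt{LG\tau_{\max}/T}$ (up to the $\alpha^4$ piece which is lower order). Since $z\in\mathcal{H}$ forces $\Vert z_1-z\Vert^2 \le 40B$ and $B\ge G$, all constants collapse into a single multiple of $B\sqrt{GL\tau_{\max}/T}$; a crude accounting gives the stated $44B$. Taking the supremum over $y$ with $(\bar x_T,y)\in\mathcal{H}$ and infimum over $x$ with $(x,\bar y_T)\in\mathcal{H}$ — which is exactly why the duality gap in the statement is restricted to $\mathcal{H}$ rather than all of $\mathbb{R}^m\times\mathbb{R}^n$, since without a bounded domain we cannot control $\Vert z_1-z\Vert$ for arbitrary $z$ — delivers Eq.~\eqref{eqn:duality2}. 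The main obstacle, and the only genuinely non-routine part, is the restriction to $\mathcal{H}$: one must be careful that every place a ``diameter''-type bound is invoked, the relevant points ($z_k$ and the comparator $z$) genuinely lie in $\mathcal{H}$, and that the error terms introduced by the multiplicative factor $(1+2\alpha^2 LG\tau_{\max})$ — rather than a clean additive telescoping — are controlled uniformly over the $T$ steps using $z_k\in\mathcal{H}$ from Lemma~\ref{lemma: main bound}. Everything else is the same convexity-plus-telescoping computation as in Theorem~\ref{thm:DEGCC}.
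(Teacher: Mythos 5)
Your proposal is correct and follows essentially the same route as the paper: start from Eq.~\eqref{eqn:key_ineq}, use Lemma~\ref{lemma: main bound} to get $\Vert z_k - z\Vert^2 \leq 40B$ for all $z \in \mathcal{H}$, telescope over $k \in [T]$, and convert $\sum_k \langle \Phi(z_k), z_k - z\rangle$ into the duality gap at the averaged iterates via convexity--concavity plus Jensen (which the paper packages as Proposition~1 of \cite{aryan}). The only bookkeeping difference is in the final constant: the paper bounds the initial telescoped term by $\Vert z_1 - z\Vert^2 \leq 2r_1^2 + 2\Vert z - z^*\Vert^2 \leq 22B$ (exploiting $r_1^2 \leq B$ from the definition of $B$), which together with $\bar{A}/(2\alpha) \leq 22B\sqrt{GL\tau_{\max}/T}$ yields exactly $44B$, whereas your cruder $\Vert z_1 - z\Vert^2 \leq 40B$ gives a bound of the same form but with a constant closer to $60B$; your observation that $\hat{x}_k, \hat{y}_k$ in the statement should be read as $x_k, y_k$ for \texttt{DGDA} is consistent with the paper's proof, which works directly with $z_k$.
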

\begin{proof} Recall the following notation from Lemma~\ref{lemma: main bound}: $r_k = \Vert z_k - z^* \Vert$ and $B=\max\{r^2_1, G\}.$ Let us start by noting that the choice of step-size in the statement of the theorem complies with that used to establish Lemma~\ref{lemma: main bound}. Thus, we can invoke Lemma~\ref{lemma: main bound} to conclude that for any $z\in \mathcal{H}$, the following is true:
\begin{equation}
\Vert z_k - z \Vert^2 \leq 2 r^2_k + 2 \Vert z - z^* \Vert^2 \leq 40B,
\label{eqn:set_bnd}
\end{equation}
where the last inequality follows from the definition of the set $\mathcal{H}$. 
Using Eq.~\eqref{eqn:key_ineq} from Lemma~\ref{lemma: main bound}, we then have for any $z\in \mathcal{H}$: 
 \begin{equation}
\begin{aligned}
2\alpha\langle\Phi(z_{k}),z_{k}-z\rangle & \leq \Vert z_k -z \Vert^2 - \Vert z_{k+1} -z \Vert^2 + A\\
 & \hspace{2mm} + 2\alpha^2 LG \tau_{\max} \Vert z_k -z \Vert^2 \\
& \leq \Vert z_k -z \Vert^2 - \Vert z_{k+1} -z \Vert^2 + \bar{A}, 
\end{aligned}
\nonumber
\end{equation}
where $\bar{A}=A+80 \alpha^2 LGB \tau_{\max}$, $A = 2 \alpha^2 G L \tau_{\max} (1+2G+4\alpha^2 GL  \tau_{\max})$, and we used Eq.~\eqref{eqn:set_bnd}. Now summing the above inequality from $k=1$ to $T$, we obtain
\begin{equation}
\begin{aligned}
\sum_{k=1}^{T}2\alpha\langle\Phi(z_{k}),z_{k}-z\rangle \leq & \Vert z_1 - z \Vert ^2 + \bar{A} T. 
\label{eqn:final_bnd1}
\end{aligned}
\end{equation}
Moreover, from Proposition 1  in \cite{aryan}, we have
\begin{equation}
\sum_{k=1}^{T}2\alpha\langle\Phi(z_{k}),z_{k}-z\rangle \geq 2\alpha T (f(\bar{x}_T,y)-f(x,\bar{y}_T)). 
\end{equation}
Combining the above display with Eq.~\eqref{eqn:final_bnd1} then yields the following bound $\forall z= [x;y] \in \mathcal{H}$: 
\begin{equation}
f(\bar{x}_T,y)-f(x,\bar{y}_T) \leq \frac{ \Vert z_1 - z \Vert^2}{2 \alpha T} + \frac{\bar{A}}{2\alpha}. 
\label{eqn:final_bnd2}
\end{equation}
Let us simplify the bound by first noting that for $\alpha$ chosen as in the statement of the theorem, it holds that $\bar{A} \leq 88 \alpha^2 GBL \tau_{\max}$. Moreover, since $z \in \mathcal{H}$, we have
$$ \Vert z_1 - z \Vert^2 \leq 2 r^2_1 + 2 \Vert z-z^* \Vert^2 \leq 22B.$$ 
Plugging in the above bounds in Eq.~\eqref{eqn:final_bnd2} then gives us:
\begin{equation}
f(\bar{x}_T,y)-f(x,\bar{y}_T) \leq \frac{11B}{\alpha T} + 44 \alpha G B L \tau_{\max}.
\end{equation}
The result follows from simply substituting the choice of $\alpha$ in the statement of the theorem. 
\end{proof}

\textbf{Discussion.} The main message conveyed by Theorem~\ref{thm:DGDACC} is that for smooth convex-concave functions with bounded gradients, the convergence rates of \texttt{DGDA} and \texttt{DEG} are identical in terms of their dependence on $\tau_{\max}$ and $T$. This complies with the intuition developed earlier in the paper that \texttt{EG} under delays behaves like \texttt{GDA}. 

\section{Analysis of Delayed Gradient Descent-Ascent for Strongly Convex-Strongly Concave functions}
\label{sec:DGDA_SCSC}
For smooth strongly convex-strongly concave functions, it is known that \texttt{GDA} guarantees linear convergence to the saddle point in the absence of delays~\cite{fallah2020optimal}. In this section, we ask: \textit{Does \texttt{DGDA} (Algorithm~\ref{algo:DGDA})  
also guarantee linear convergence to the saddle point for smooth strongly convex-strongly concave functions?} Our analysis in this section will provide an answer to this question in the affirmative. Moreover, we will precisely quantify how the maximum delay $\tau_{\max}$ slackens the exponent of linear convergence relative to when there is no delay. To get started, we now provide a formal definition of strongly convex-strongly concave functions. 

\begin{Assumption}
\label{ass:Sc-Sc} The function $f(x,y)$ is $\mu$-strongly convex-$\mu$-strongly concave (SC-SC) over $\mathcal{X} \times \mathcal{Y}$, i.e., for all $x_1,x_2\in \mathcal{X}$ and $y_1,y_2\in \mathcal{Y}$, the following holds:
\begin{equation}
\resizebox{1\hsize}{!}{$
\begin{aligned}
    f(x_2,y_1) &\geq f(x_1,y_1) +\langle \nabla_{x} f(x_1,y_1) , x_2-x_1 \rangle  + \frac{\mu}{2} \|x_2-x_1\|^2, \\
    f(x_1,y_2)& \leq f(x_1,y_1) +\langle \nabla_{y} f(x_1,y_1) , y_2-y_1 \rangle  - \frac{\mu}{2} \|y_2-y_1\|^2.
\end{aligned}
$}
\nonumber
\end{equation}
 \end{Assumption}

Throughout this section, we will set $\mathcal{X}=\mathbb{R}^m$ and $\mathcal{Y}=\mathbb{R}^n$, i.e., we will make no assumption of bounded domains. Furthermore, unlike prior sections, we will drop the assumption of bounded gradients, i.e., we will no longer work under Assumption~\ref{ass:boundedgrad}. 

\textbf{Analysis of \texttt{DGDA}.} To proceed, we start by recalling two results from \cite{fallah2020optimal} that will play a crucial role in our subsequent analysis; at this point, we remind the reader of the definition of $\Phi(\cdot)$ in Eq.~\eqref{eq:Phi}. 

\begin{lemma}[\cite{fallah2020optimal} ]\label{lemma:fallah1}
Suppose Assumptions \ref{ass:smoothness} and \ref{ass:Sc-Sc} hold. Then, $ \forall z, \hat{z} \in \mathbb{R}^{m+n}$, we have 
\begin{align}
L \|z - \hat{z}\|^2 \geq \langle\Phi(z) - \Phi(\hat{z}) ,z-\hat{z}\rangle \geq \mu \|z - \hat{z}\|^2.
\end{align}
\end{lemma}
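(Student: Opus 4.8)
The plan is to establish the two inequalities separately. The right-hand inequality (strong monotonicity of $\Phi$) carries essentially all the content; the left-hand inequality is a routine Lipschitz estimate.

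\textbf{Strong monotonicity (the $\mu\|z-\hat z\|^2$ bound).} Writing $z=[x;y]$ and $\hat z=[\hat x;\hat y]$, I would first expand the inner product using the definition of $\Phi$ in Eq.~\eqref{eq:Phi}:
\[
\langle\Phi(z)-\Phi(\hat z),z-\hat z\rangle=\langle\nabla_x f(x,y)-\nabla_x f(\hat x,\hat y),x-\hat x\rangle-\langle\nabla_y f(x,y)-\nabla_y f(\hat x,\hat y),y-\hat y\rangle.
\]
Then I would apply the four inequalities obtained from Assumption~\ref{ass:Sc-Sc} at the ``corner'' points: (I) the strong-convexity-in-$x$ inequality with base point $(x,y)$ and second point $\hat x$; (II) the same inequality with base point $(\hat x,\hat y)$ and second point $x$; (III) the strong-concavity-in-$y$ inequality with base point $(x,y)$ and second point $\hat y$; and (IV) the same with base point $(\hat x,\hat y)$ and second point $y$. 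Adding (I)$+$(II) yields a lower bound on $f(\hat x,y)+f(x,\hat y)$, while adding (III)$+$(IV) yields an upper bound on the \emph{same} quantity; subtracting the two relations therefore cancels every function value and, after collecting the gradient terms, leaves precisely $\mu\|x-\hat x\|^2+\mu\|y-\hat y\|^2\le\langle\Phi(z)-\Phi(\hat z),z-\hat z\rangle$, which is the claim since $\|x-\hat x\|^2+\|y-\hat y\|^2=\|z-\hat z\|^2$.

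\textbf{Upper bound ($\le L\|z-\hat z\|^2$).} For this direction I would simply use Cauchy--Schwarz, $\langle\Phi(z)-\Phi(\hat z),z-\hat z\rangle\le\|\Phi(z)-\Phi(\hat z)\|\,\|z-\hat z\|$, and bound $\|\Phi(z)-\Phi(\hat z)\|$ via the smoothness Assumption~\ref{ass:smoothness} --- this is exactly the computation already carried out in the proof of Lemma~\ref{lemma:DGDAerr}. (With Assumption~\ref{ass:smoothness} written in the present coordinate-wise form this produces the estimate with a constant multiple of $L$; the clean constant $L$ in the statement corresponds to the joint Lipschitz-gradient convention of~\cite{fallah2020optimal} and is immaterial for the way this lemma is invoked later.) An alternative route that gives the coefficient along each coordinate directly is to split each gradient difference through an intermediate corner point and combine the co-coercivity of the $L$-smooth convex maps $f(\cdot,y)$ and $-f(x,\cdot)$ with a Lipschitz estimate across the remaining variable.

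The only non-mechanical point is the bookkeeping in the strong-monotonicity part: choosing the four substitutions so that $f(\hat x,y)+f(x,\hat y)$ is common to both summed relations, and carefully tracking the sign flip coming from the $-\nabla_y f$ block of $\Phi$. Once those are in place, every remaining step is elementary algebra.
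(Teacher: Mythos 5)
The paper itself does not prove this lemma --- it is imported without proof from \cite{fallah2020optimal} --- so there is no in-paper argument to compare against; your proposal supplies a proof, and it is correct in substance. The four-corner argument for the lower bound is exactly right: summing (I)$+$(II) and (III)$+$(IV) so that $f(\hat x,y)+f(x,\hat y)$ appears in both relations cancels every function value, and the surviving gradient terms assemble (with the sign flip from the $-\nabla_y f$ block) into $\langle\Phi(z)-\Phi(\hat z),z-\hat z\rangle\ge\mu\bigl(\|x-\hat x\|^2+\|y-\hat y\|^2\bigr)=\mu\|z-\hat z\|^2$. This is the standard proof of strong monotonicity of the saddle operator, and the lower bound is the only part of the lemma actually invoked downstream (in the proof of Theorem~\ref{thm:scsc}, via $\langle\Phi(z_k),z_k-z^*\rangle\ge\mu r_k^2$ with $\Phi(z^*)=0$). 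Your caveat on the upper bound is also well taken: under the paper's coordinate-wise Assumption~\ref{ass:smoothness}, Cauchy--Schwarz combined with the estimate $\|\Phi(z)-\Phi(\hat z)\|\le 2L\|z-\hat z\|$ (the computation in the proof of Lemma~\ref{lemma:DGDAerr}) yields $2L\|z-\hat z\|^2$ rather than $L\|z-\hat z\|^2$; the clean constant $L$ is what one gets under the joint Lipschitz convention of \cite{fallah2020optimal}. Since the upper bound is never used later, this factor-of-two mismatch between the paper's smoothness assumption and the stated constant is harmless, but strictly the lemma as written would need either the constant $2L$ or a restatement of Assumption~\ref{ass:smoothness} in joint form. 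No gaps beyond that bookkeeping point.
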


\begin{lemma}[\cite{fallah2020optimal} ]\label{lemma:fallah2}
Suppose Assumptions \ref{ass:smoothness} and \ref{ass:Sc-Sc} hold. Then, $ \forall z, \hat{z} \in \mathbb{R}^{m+n}$, we have 
\begin{align}
\langle \Phi(z) - \Phi(\hat{z}), z-\hat{z}\rangle \geq \frac{\mu}{4L^2} \|\Phi(z) - \Phi(\hat{z})\|^2.
\end{align}
\end{lemma}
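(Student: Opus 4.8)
The plan is to obtain the stated cocoercivity-type bound for the saddle-point operator $\Phi$ of Eq.~\eqref{eq:Phi} by simply chaining two facts that are already in hand: the $\mu$-strong monotonicity of $\Phi$ (the lower bound in Lemma~\ref{lemma:fallah1}) and the Lipschitz continuity of $\Phi$ implied by Assumption~\ref{ass:smoothness}. Although the statement is quoted from \cite{fallah2020optimal}, the argument is short enough that reproducing it is harmless.

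First I would record the operator-Lipschitz bound. Writing $z=[x;y]$ and $\hat z=[\hat x;\hat y]$, the block structure of $\Phi$ gives $\|\Phi(z)-\Phi(\hat z)\|^2=\|\nabla_x f(x,y)-\nabla_x f(\hat x,\hat y)\|^2+\|\nabla_y f(x,y)-\nabla_y f(\hat x,\hat y)\|^2$; applying Assumption~\ref{ass:smoothness} to each block and using $(\|x-\hat x\|+\|y-\hat y\|)^2\le 2\|z-\hat z\|^2$ yields $\|\Phi(z)-\Phi(\hat z)\|^2\le 4L^2\|z-\hat z\|^2$ --- precisely Eq.~\eqref{eq:smooth phi}, as already derived inside the proof of Lemma~\ref{lemma:DGDAerr}. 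Equivalently, $\|z-\hat z\|^2\ge \frac{1}{4L^2}\|\Phi(z)-\Phi(\hat z)\|^2$. Then I would invoke the lower bound of Lemma~\ref{lemma:fallah1}, $\langle\Phi(z)-\Phi(\hat z),z-\hat z\rangle\ge\mu\|z-\hat z\|^2$, and substitute the previous inequality into its right-hand side to obtain $\langle\Phi(z)-\Phi(\hat z),z-\hat z\rangle\ge\mu\|z-\hat z\|^2\ge\frac{\mu}{4L^2}\|\Phi(z)-\Phi(\hat z)\|^2$, which is exactly the claim.

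I do not expect any real obstacle here: the statement is essentially a two-line corollary of results already established. The only point requiring mild care is the constant bookkeeping --- the operator Lipschitz constant of $\Phi$ must be taken as $2L$ rather than $L$ (the factor $2$ coming from the two coordinate blocks together with $(a+b)^2\le 2(a^2+b^2)$), which is what produces the $4L^2$ in the denominator of the final bound. If a sharper constant were desired, one could instead argue block-by-block, exploiting the per-variable strong convexity/concavity in Assumption~\ref{ass:Sc-Sc} together with a Baillon--Haddad-style inequality, but such a refinement is unnecessary for the downstream linear-convergence analysis of \texttt{DGDA}.
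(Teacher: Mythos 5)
Your argument is correct: chaining the strong-monotonicity lower bound of Lemma~\ref{lemma:fallah1} with the operator-Lipschitz estimate $\|\Phi(z)-\Phi(\hat z)\|\le 2L\|z-\hat z\|$ (Eq.~\eqref{eq:smooth phi}) immediately gives $\langle\Phi(z)-\Phi(\hat z),z-\hat z\rangle\ge\mu\|z-\hat z\|^2\ge\frac{\mu}{4L^2}\|\Phi(z)-\Phi(\hat z)\|^2$, and the constant bookkeeping (Lipschitz constant $2L$, hence $4L^2$) is handled properly. The paper itself gives no proof---it cites \cite{fallah2020optimal}---but your two-line derivation is exactly the standard route that produces the stated $\mu/(4L^2)$ constant, so there is nothing to correct.
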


Recall the definitions of iterate-suboptimality and delay-induced error: $r_k=\|z_k-z^*\|$ and $e_k=\Phi(z_{k})-\Phi(z_{k-\tau_k})$. As before, our starting point will be to establish a bound on $\Vert e_k \Vert $. However, to establish a linear convergence rate, we need to provide a finer analysis relative to that in Lemmas~\ref{lemma:error} and \ref{lemma:DGDAerr}. In particular, unlike these results which established uniform convergence bounds on $\Vert e_k \Vert$, we will instead seek to bound $\Vert e_k \Vert$ as a function of a suitably defined iterate-suboptimality-metric. Our next result formalizes this idea. 
\begin{lemma}\label{lemma:T_1 bound}
Suppose Assumptions \ref{ass:smoothness} and \ref{ass:Sc-Sc} hold $ \forall z \in \mathbb{R}^{m+n}$. Then, for any $k\in [T]$, the delay-induced error $e_k = \Phi(z_k)-\Phi(z_{k-\tau_k})$ for \texttt{DGDA} satisfies
\begin{align}
\|e_k\|\leq 2 \alpha M_k,
\end{align}
where $M_k=L \tau_{\max} (\frac{4L^2}{\mu}+4L)\max_{k-2\tau_{\max}\leq t\leq k} r_t$.
\end{lemma}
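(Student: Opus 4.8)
The plan is to reduce everything to two facts already in hand: the Lipschitz continuity of $\Phi$ (which gives $\|\Phi(z)-\Phi(\hat z)\|\le 2L\|z-\hat z\|$, as extracted from Eq.~\eqref{eq:smooth phi}), and the fact that under strong convexity--strong concavity the size of $\Phi$ at any point is controlled by that point's distance to the unique saddle $z^*$, since $\Phi(z^*)=0$ by Lemma~\ref{lemma:nemirov}. The essential difference from Lemmas~\ref{lemma:error} and \ref{lemma:DGDAerr} is that Assumption~\ref{ass:boundedgrad} is no longer available, so every occurrence of $\|\Phi(\cdot)\|$ must be traded for a term of the form (constant)$\cdot r_t$, and the bookkeeping of which indices $t$ appear is exactly what produces the sliding window $[k-2\tau_{\max},k]$ in the statement.

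First I would expand $e_k=\Phi(z_k)-\Phi(z_{k-\tau_k})$ and apply smoothness to get $\|e_k\|\le 2L\|z_k-z_{k-\tau_k}\|$. Then, telescoping along the \texttt{DGDA} recursion \eqref{eqn:GDA} and using the triangle inequality,
\[
\|z_k-z_{k-\tau_k}\|\le\sum_{j=k-\tau_k}^{k-1}\|z_{j+1}-z_j\|=\alpha\sum_{j=k-\tau_k}^{k-1}\|\Phi(z_{j-\tau_j})\|,
\]
a sum of at most $\tau_{\max}$ terms. The crux is to bound each $\|\Phi(z_{j-\tau_j})\|$ by $\left(\frac{4L^2}{\mu}+4L\right)\max_{k-2\tau_{\max}\le t\le k}r_t$. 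For this I would split $\|\Phi(z_{j-\tau_j})\|\le\|\Phi(z_{j-\tau_j})-\Phi(z_j)\|+\|\Phi(z_j)-\Phi(z^*)\|$ using $\Phi(z^*)=0$; bound the first term by smoothness and the triangle inequality as $2L\|z_{j-\tau_j}-z_j\|\le 2L(r_{j-\tau_j}+r_j)$; and bound the second term by combining Lemma~\ref{lemma:fallah2} with Cauchy--Schwarz, which gives $\|\Phi(z_j)-\Phi(z^*)\|\le\frac{4L^2}{\mu}r_j$. Adding these and observing that for $k-\tau_k\le j\le k-1$ one has $k-2\tau_{\max}\le j-\tau_j\le j\le k$, so both $r_j$ and $r_{j-\tau_j}$ are dominated by $\max_{k-2\tau_{\max}\le t\le k}r_t$, yields the claimed per-term bound. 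Substituting back into the telescoped sum and multiplying by $2L\alpha$ gives $\|e_k\|\le 2\alpha L\tau_{\max}\left(\frac{4L^2}{\mu}+4L\right)\max_{k-2\tau_{\max}\le t\le k}r_t=2\alpha M_k$.

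The only non-mechanical point --- and hence the main obstacle --- is realizing that, without a uniform gradient bound, one must measure $\|\Phi\|$ against the iterate-suboptimality $r_t$ through the quadratic-growth-type estimate that Lemma~\ref{lemma:fallah2} and Cauchy--Schwarz provide, and then carefully propagate the delayed indices so the final bound involves $r_t$ only over the finite window $[k-2\tau_{\max},k]$ rather than at arbitrarily old times. A minor technicality is to adopt the convention $z_t=z_1$ for $t\le 1$ (equivalently $\tau_k\le k-1$) so that the window is well defined when $k$ is small; everything else is a routine chain of triangle inequalities and Lipschitz bounds.
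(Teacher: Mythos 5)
Your proposal is correct and follows essentially the same route as the paper's proof: smoothness of $\Phi$ plus telescoping over the delayed window, then trading each $\|\Phi(z_{j-\tau_j})\|$ for $\bigl(\tfrac{4L^2}{\mu}+4L\bigr)\max_t r_t$ via Lemma~\ref{lemma:fallah2} with Cauchy--Schwarz and $\Phi(z^*)=0$. The only (cosmetic) difference is that you insert $\Phi(z^*)$ explicitly in the triangle-inequality split where the paper writes $\|\Phi(z_j)\|$ directly; the index bookkeeping and constants match.
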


\begin{proof}
    For bounding $e_k$, observe that 
    \begin{equation}
\begin{aligned}
    \|e_k\|=&\|\Phi(z_{k-\tau_k}) -\Phi(z_{k})\|\\& \overset{(a)} \leq 2L\|z_{k-\tau_k} -z_{k}\|\\&\leq 2L\sum_{j=k-\tau_k}^{k-1}\|z_{j+1} -z_{j}\|\\
    &\leq 2\alpha L\sum_{j=k-\tau_k}^{k-1}\|\Phi(z_{j-\tau_j})\| \\
    &\leq 2\alpha L\sum_{j=k-\tau_k}^{k-1}\left(\|\Phi(z_{j})\| +\|\Phi(z_{j-\tau_j})-\Phi(z_{j})\|\right)
    \\
    & \overset{(b)} \leq 2\alpha L\sum_{j=k-\tau_k}^{k-1}\left(\|\Phi(z_{j})\| +2L\|z_{j-\tau_j}-z_{j}\|\right)
    \\
    &\leq 2\alpha L\sum_{j=k-\tau_k}^{k-1}\left(\|\Phi(z_{j})\| +2Lr_{j-\tau_j}+ 2Lr_{j}\right).
\end{aligned}
\label{eqn:DGDA_error}
  \end{equation}
From Lemma \ref{lemma:nemirov}, we know that $\Phi(z^*)=0$. Furthermore, 
 from Lemma  \ref{lemma:fallah2} and the Cauchy–Schwarz inequality, we obtain
\begin{equation}
   \|\Phi(z_k)\|\|z_k-z^*\| \geq\langle \Phi(z_k),z_k-z^*\rangle\geq \frac{\mu}{4L^2}\|\Phi(z_k)\|^2,
\nonumber
\end{equation}
which means 
$$
\|\Phi(z_k)\|\leq \frac{4L^2}{\mu}\|z_k-z^*\|=\frac{4L^2}{\mu}r_k,
$$
 Combining the above display with Eq.~\eqref{eqn:DGDA_error}, we obtain 
\begin{equation}
\begin{aligned}
\|e_k\|&\leq 2\alpha L\sum_{j=k-\tau_{\max}}^{k-1}\left(\frac{4L^2}{\mu}r_j +2Lr_{j-\tau_j}+2Lr_{j}\right) 
\\&\leq 2\alpha L \tau_{\max} \left(\frac{4L^2}{\mu}+4L\right) \max_{k-2\tau_{\max}\leq t\leq k-1} r_t\\& \leq 2\alpha M_k.
\end{aligned}
\end{equation}
\end{proof}

We will also make use of the following key result. 

\begin{lemma}[\cite{gurbuzbalaban2017convergence}] \label{lemma:mert}
Suppose we have a sequence of non-negative real numbers, ${V_k}$, satisfying the inequality
\begin{align*}
V_{k+1} &\leq p V_k + q \max_{k-d(k) \leq \ell \leq k} V_\ell,
\end{align*}
for some non-negative constants $p$ and $q$, where $k \geq 0$ and $0 \leq d(k) \leq d_{\max}$ for some positive constant $d_{\max}$. If $p+q<1$, then we have
$$
V_k \leq r^k V_0, \hspace{1mm} \textrm{where} \hspace{1mm}  r = (p+q)^{1/(1+d_{\max})}. 
$$
\end{lemma}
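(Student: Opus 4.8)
The plan is to show $V_k \le r^k V_0$ for all $k \ge 0$ by strong induction on $k$, exploiting the fact that the exponent $1/(1+d_{\max})$ is chosen precisely so that $r^{1+d_{\max}} = p+q$. Before starting the induction I would dispose of the degenerate case $p+q=0$ (then $p=q=0$, so $V_{k+1}\le 0$ forces $V_k=0$ for $k\ge1$ and the bound is trivial), so that henceforth $0<r<1$. This makes $\ell\mapsto r^{\ell}$ strictly decreasing and gives $r^{d_{\max}}\le 1$; these are the two facts that drive the argument. Non-negativity of $V_0$ and of the $V_k$'s will be used without further comment.

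The base case $k=0$ is immediate since $V_0 \le V_0 = r^0 V_0$. For the inductive step I would assume $V_j\le r^{j}V_0$ for all $0\le j\le k$ and bound the delayed term by monotonicity: since every $\ell$ appearing in the maximum satisfies $k-d(k)\le\ell\le k$,
\[
\max_{k-d(k)\le\ell\le k}V_\ell \;\le\; \max_{k-d(k)\le\ell\le k} r^{\ell}V_0 \;=\; r^{\,k-d(k)}V_0 \;\le\; r^{\,k-d_{\max}}V_0 .
\]
Feeding this together with $V_k\le r^{k}V_0$ into the recursion, and then applying $r^{d_{\max}}\le 1$ and $p+q=r^{1+d_{\max}}$ in turn, gives
\[
V_{k+1} \;\le\; p\,r^{k}V_0 + q\,r^{\,k-d_{\max}}V_0 \;=\; r^{\,k-d_{\max}}V_0\bigl(p\,r^{d_{\max}}+q\bigr) \;\le\; r^{\,k-d_{\max}}V_0\,(p+q) \;=\; r^{k+1}V_0,
\]
which closes the induction.

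The only point that I expect to require care — and the part worth stating carefully, in contrast to the short computation above — is the boundary behaviour: for $k<d_{\max}$ the index $k-d(k)$ in the maximum can be negative, so one must fix a convention for $V_\ell$ with $\ell<0$. I would handle this exactly as in \cite{gurbuzbalaban2017convergence}, adopting the convention (which holds in the \texttt{DGDA} application once the iterates are initialized, since there the effective delay never reaches before the first iterate, i.e.\ $d(k)\le k$) so that every index entering the maximum is non-negative and the induction hypothesis applies verbatim. Beyond this bookkeeping there is no genuine obstacle: the whole proof reduces to the monotonicity-plus-induction step above, and the exponent defining $r$ is exactly the value that makes the two terms on the right-hand side balance.
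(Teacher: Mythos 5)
Your induction argument is correct: with $r=(p+q)^{1/(1+d_{\max})}<1$ the delayed maximum is bounded by $r^{k-d_{\max}}V_0$, and the balance $p\,r^{d_{\max}}+q\le p+q=r^{1+d_{\max}}$ closes the step, while your treatment of the degenerate case $p+q=0$ and of indices $\ell<0$ is sound bookkeeping. Note that the paper itself gives no proof of this lemma (it is imported verbatim from the cited reference \cite{gurbuzbalaban2017convergence}), and your strong-induction argument is essentially the standard proof found there, so there is nothing to reconcile with the paper's text.
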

We now prove our main result for \texttt{DGDA} for the class of smooth strongly convex-strongly concave (SC-SC) functions.

\begin{theorem} 
\label{thm:scsc} Suppose Assumptions~\ref{ass:smoothness} and \ref{ass:Sc-Sc} hold $\forall z \in \mathbb{R}^{m+n}$. Let the step-size be chosen to satisfy
$$ \alpha = \frac{\mu^3}{1536 L^6 \tau^2_{max}}.$$ Then, the iterates generated by $\texttt{DGDA}$ satisfy:
\begin{align}
     r_{k}\leq \left(1-\frac{\mu^4}{3072 L^6\tau_{\max}^2 }\right)^{\frac{k-1}{6\tau_{\max}}} r_1,
\end{align}
where $r_k = \Vert z_k - z^* \Vert.$ 
\end{theorem}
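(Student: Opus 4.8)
The plan is to set up a one-step recursion for $r_k^2 = \|z_k - z^*\|^2$ that is contractive up to a delay-dependent correction term, and then apply Lemma~\ref{lemma:mert}. First I would expand $\|z_{k+1}-z^*\|^2 = \|z_k - \alpha\Phi(z_{k-\tau_k}) - z^*\|^2$ and split $\Phi(z_{k-\tau_k}) = \Phi(z_k) - e_k$, exactly as in the proof of Lemma~\ref{lemma: main bound}, to get
\begin{equation}
\|z_{k+1}-z^*\|^2 = \|z_k - z^*\|^2 - 2\alpha\langle \Phi(z_k), z_k - z^*\rangle + \alpha^2\|\Phi(z_k)\|^2 + \alpha^2\|e_k\|^2 + 2\alpha\langle e_k, z_k - z^* - \alpha\Phi(z_k)\rangle.
\nonumber
\end{equation}
Since $\Phi(z^*)=0$, Lemma~\ref{lemma:fallah1} gives $\langle\Phi(z_k),z_k-z^*\rangle \geq \mu r_k^2$ and Lemma~\ref{lemma:fallah2} combined with Cauchy--Schwarz gives $\|\Phi(z_k)\| \leq \frac{4L^2}{\mu} r_k$ (already derived inside the proof of Lemma~\ref{lemma:T_1 bound}). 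Using these, the "clean" part of the recursion contributes a factor of the form $1 - 2\alpha\mu + \alpha^2 \frac{16L^4}{\mu^2}$ times $r_k^2$, which for $\alpha$ small enough is $\leq 1 - \alpha\mu$.

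Next I would absorb all the $e_k$-dependent terms. Using $\|e_k\| \leq 2\alpha M_k$ from Lemma~\ref{lemma:T_1 bound}, with $M_k = L\tau_{\max}(\frac{4L^2}{\mu}+4L)\max_{k-2\tau_{\max}\leq t\leq k} r_t$, and bounding $\|z_k - z^* - \alpha\Phi(z_k)\| \leq (1+\alpha\frac{4L^2}{\mu})r_k$, every error term is at most a constant multiple of $\alpha^2 M_k r_k + \alpha^2 M_k^2$ (after applying Young's inequality $cd \leq \frac12 c^2 + \frac12 d^2$ where needed). Since $M_k$ is itself $O(\alpha \cdot L^2\tau_{\max}/\mu)$ times a windowed max of the $r_t$'s, these contributions are of the form $\alpha^2 \cdot \frac{L^6\tau_{\max}^2}{\mu^2}\big(\max_{k-2\tau_{\max}\leq t\leq k} r_t\big)^2$ up to absolute constants. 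Collecting everything, I would arrive at a recursion
\begin{equation}
r_{k+1}^2 \leq (1-\alpha\mu)\, r_k^2 + C\,\alpha^2\,\frac{L^6\tau_{\max}^2}{\mu^2}\max_{k-2\tau_{\max}\leq t\leq k} r_t^2,
\nonumber
\end{equation}
for an explicit constant $C$. Plugging in $\alpha = \mu^3/(1536 L^6\tau_{\max}^2)$ makes $p = 1-\alpha\mu$ and $q = C\alpha^2 L^6\tau_{\max}^2/\mu^2$, and one checks $q \leq \alpha\mu/2$ so that $p + q \leq 1 - \alpha\mu/2 < 1$, which is precisely the hypothesis of Lemma~\ref{lemma:mert} with $V_k = r_k^2$ and $d_{\max} = 2\tau_{\max}$. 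That lemma then yields $r_k^2 \leq (p+q)^{(k-1)/(1+2\tau_{\max})} r_1^2$; taking square roots, using $1+2\tau_{\max} \leq 3\tau_{\max}$ in the exponent and $(1-\alpha\mu/2)^{1/2} \leq 1-\alpha\mu/4$, and substituting the value of $\alpha$ gives the stated bound $r_k \leq \big(1 - \frac{\mu^4}{3072 L^6\tau_{\max}^2}\big)^{(k-1)/(6\tau_{\max})} r_1$.

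The main obstacle I anticipate is bookkeeping the constants so that they land exactly on the claimed values: one must be careful that the windowed max in $M_k$ ranges over $k-2\tau_{\max} \leq t \leq k$ (it involves both $r_j$ and $r_{j-\tau_j}$ for $j$ up to $k-1$), so $d_{\max}=2\tau_{\max}$ is the right delay bound for Lemma~\ref{lemma:mert}, and that the $r_k^2$ appearing in the error terms (not just the windowed max) can be folded into the same windowed max since $k$ lies in the window. A secondary subtlety is that Lemma~\ref{lemma:mert} as stated indexes from $V_0$, so I would shift indices to start the recursion at $k=1$, which is the source of the $k-1$ in the exponent. Everything else is routine application of Young's inequality and the monotonicity estimates $1+c \leq e^c$.
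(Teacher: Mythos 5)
Your proposal is correct and follows essentially the same route as the paper's own proof: decompose the update via $e_k=\Phi(z_k)-\Phi(z_{k-\tau_k})$, absorb the undelayed terms using Lemmas~\ref{lemma:fallah1}--\ref{lemma:fallah2}, bound the delay terms through Lemma~\ref{lemma:T_1 bound} to obtain $r_{k+1}^2\leq(1-\alpha\mu)r_k^2+q\max_{k-2\tau_{\max}\leq t\leq k}r_t^2$ with $q\leq\tfrac{1}{2}\alpha\mu$, and conclude via Lemma~\ref{lemma:mert} with $d_{\max}=2\tau_{\max}$ and the bound $2(1+2\tau_{\max})\leq 6\tau_{\max}$. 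The only differences are cosmetic bookkeeping (you bound $\|\Phi(z_k)\|\leq\tfrac{4L^2}{\mu}r_k$ to handle the $\alpha^2\|\Phi(z_k)\|^2$ term, while the paper splits the inner product and invokes Lemma~\ref{lemma:fallah2} under $\alpha\leq\mu/(8L^2)$), and your constants indeed land within the stated $1536$ and $3072$.
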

\begin{proof}
From the update rule of the \texttt{DGDA} algorithm and Lemma \ref{lemma:T_1 bound}, we have
\begin{equation}
\label{eqn:DGDA_finalbnd1}
\begin{aligned}
    &\|z_{k+1}-z^*\|^2-(1-\alpha \mu)\|z_k-z^*\|^2=\\&\alpha \mu \|z_k-z^*\|^2-2 \alpha \langle \Phi(z_{k-\tau_k}),z_k-z^*\rangle+\alpha^2 \|\Phi(z_{k-\tau_k})\|^2
    \\&\leq\alpha \mu \|z_k-z^*\|^2-2 \alpha \langle\Phi(z_{k}), z_k-z^* \rangle+2\alpha^2 \|\Phi(z_{k})\|^2
    \\& \hspace{2mm} + 2 \alpha \langle e_k,z_k-z^*\rangle +2\alpha^2 \| e_k \|^2\\&\leq\underbrace{\alpha \mu \|z_k-z^*\|^2-2 \alpha \langle\Phi(z_{k}),z_k-z^*\rangle+2\alpha^2 \|\Phi(z_{k})\|^2}_{f_k}
    \\& \hspace{2mm}+\underbrace{4 \alpha^2M_kr_k+8\alpha^4 M_k^2}_{p_k}. 
\end{aligned}
\end{equation}
From Lemmas \ref{lemma:fallah1} and \ref{lemma:fallah2}, we further know that 
\begin{equation}
   \langle \Phi(z_k),z_k-z^*\rangle\geq \mu\|z_k-z^*\|^2, \hspace{2mm} \textrm{and}
\nonumber
\end{equation}
\begin{equation}
  \langle \Phi(z_k),z_k-z^*\rangle\geq \frac{\mu}{4L^2}\|\Phi(z_k)\|^2 .
\nonumber
\end{equation}
When $\alpha \leq \frac{\mu}{8 L^2}$ - a requirement met by the choice of step-size in the statement of the theorem -  it is easy to verify that the above equations imply $f_k \leq 0$, where $f_k$ is as in Eq.~\eqref{eqn:DGDA_finalbnd1}. We also have 
$$
 p_k \leq 12 \alpha^2 M^2_k \leq \alpha^2 C \left (\max_{k-2\tau_{\max}\leq t\leq k} r_t^2 \right),
$$
where $C=768 \frac{L^6}{\mu^2} \tau^2_{\max}$, and we used $L \geq \mu$ for simplifications. From the above discussion, we conclude that 
\begin{equation}
r_{k+1}^2\leq (1-\alpha \mu)r_k^2 + \alpha^2 C \left(\max_{k-2\tau_{\max}\leq t\leq k} r_t^2 \right).
\nonumber
\end{equation}
From the choice of step-size in the statement of the theorem, it is easy to verify that 
$ 1-\alpha \mu + \alpha^2 C = 1-0.5 \alpha \mu < 1. $ 
Thus, we can immediately apply Lemma \ref{lemma:mert} to arrive at the desired conclusion. This concludes the proof. 
\end{proof}

\textbf{Discussion.} Theorem \ref{thm:scsc} reveals that for smooth SC-SC functions, \texttt{DGDA} guarantees \textit{linear convergence of the iterates to the saddle-point}. The result also clearly demonstrates how the exponent of convergence gets affected by $\tau_{\max}.$ 

\bibliographystyle{IEEEtran} 
\bibliography{refs}
\end{document}